\newcommand{\T}{{\text{\tiny\sffamily\upshape\mdseries T}}}
\newcommand{\bx}{\boldsymbol{x}}
\newcommand{\bh}{\boldsymbol{h}}
\newcommand{\bW}{\boldsymbol{W}}
\newcommand{\bU}{\boldsymbol{U}}
\newcommand{\bw}{\boldsymbol{w}}
\newcommand{\bg}{\boldsymbol{g}} 
\newcommand{\Lnorm}{\mathcal{L}} 
\newcommand{\subW}{{\textbf{subW}}}
\newcommand{\edr}{\mathrm{e}}
\newtheorem{theorem}{Theorem}[section]
\newtheorem{proposition}{Proposition}[section]
\newtheorem{lemma}{Lemma}[section]
\newtheorem{corollary}{Corollary}[section]
\newtheorem{definition}{Definition}[section]
\theoremstyle{remark}
\newtheorem{remark}{Remark}[section]
\icmltitlerunning{Understanding Priors in Bayesian Neural Networks at the Unit Level}
\begin{document}
\twocolumn[
%\icmltitle{Bayesian Neural Networks Priors at the Level of Units}
\icmltitle{Understanding Priors in Bayesian Neural Networks at the Unit Level}

% It is OKAY to include author information, even for blind
% submissions: the style file will automatically remove it for you
% unless you've provided the [accepted] option to the icml2019
% package.

% List of affiliations: The first argument should be a (short)
% identifier you will use later to specify author affiliations
% Academic affiliations should list Department, University, City, Region, Country
% Industry affiliations should list Company, City, Region, Country

% You can specify symbols, otherwise they are numbered in order.
% Ideally, you should not use this facility. Affiliations will be numbered
% in order of appearance and this is the preferred way.
\icmlsetsymbol{equal}{*}

\begin{icmlauthorlist}
\icmlauthor{Mariia Vladimirova}{one,two}
\icmlauthor{Jakob Verbeek}{one}
\icmlauthor{Pablo Mesejo}{three}
\icmlauthor{Julyan Arbel}{one}
\end{icmlauthorlist}

\icmlaffiliation{one}{Univ. Grenoble Alpes, Inria, CNRS, Grenoble INP, LJK, 38000 Grenoble, France}

\icmlaffiliation{two}{Moscow Institute of Physics and Technology, 141701 Dolgoprudny, Russia}

\icmlaffiliation{three}{Andalusian Research Institute in Data Science and Computational Intelligence (DaSCI), University of Granada, 18071 Granada, Spain}

\icmlcorrespondingauthor{Mariia Vladimirova}{mariia.vladimirova@inria.fr}
% \icmlcorrespondingauthor{Eee Pppp}{ep@eden.co.uk}

% You may provide any keywords that you
% find helpful for describing your paper; these are used to populate
% the "keywords" metadata in the PDF but will not be shown in the document
\icmlkeywords{Bayesian Neural Network, sparsity, heavy-tailed prior}

\vskip 0.3in
]

% this must go after the closing bracket ] following \twocolumn[ ...

% This command actually creates the footnote in the first column
% listing the affiliations and the copyright notice.
% The command takes one argument, which is text to display at the start of the footnote.
% The \icmlEqualContribution command is standard text for equal contribution.
% Remove it (just {}) if you do not need this facility.

\printAffiliationsAndNotice{}  % leave blank if no need to mention equal contribution
% \printAffiliationsAndNotice{\icmlEqualContribution} % otherwise use the standard text.

\begin{abstract}
We investigate deep Bayesian neural networks with Gaussian weight priors and a class of ReLU-like nonlinearities. 
Bayesian neural networks with Gaussian priors are well known to induce an $\mathcal{L}^2$,  ``weight decay'', regularization.
Our results characterize a more intricate regularization effect at the level of the unit activations.
Our main result  establishes that the induced prior distribution on the units before and after activation becomes increasingly heavy-tailed with the depth of the layer. 
We show that first layer units are Gaussian, second layer units are sub-exponential, and units in deeper layers are characterized by  sub-Weibull distributions. 
Our results  provide new theoretical insight on deep Bayesian neural networks, which we corroborate  with simulation experiments. 
\end{abstract}

\section{Introduction}
%================================================================================ 
Neural networks (NNs), and their deep counterparts~\citep{goodfellow2016deep}, have largely been used in many research areas such as image analysis ~\citep{Krizhevsky2012}, signal processing~\citep{Graves2013}, or reinforcement learning~\citep{silver2016mastering}, just to name a few. The impressive performance provided by such machine learning approaches has greatly motivated   research  
that aims at a better understanding the driving mechanisms
behind their effectiveness. In particular, the study of the NNs distributional properties through Bayesian analysis has recently gained much attention. 

Bayesian approaches investigate models by assuming a prior distribution on their parameters. Bayesian machine learning refers to extending standard machine learning approaches with posterior inference, a line of research pioneered by works on Bayesian neural networks~\cite{neal1992bayesian,mackay1992practical}. There is a large variety of applications, e.g. gene selection~\citep{liang2018bayesian}, and the range of models is now very broad, including e.g. Bayesian generative adversarial networks~\citep{saatci2017bayesian}. See \citet{polson2017deep} for a review. The interest of the Bayesian approach to NNs is at least twofold. First, it offers a principled approach for modeling uncertainty of the training procedure, which is a limitation of standard NNs which only provide point estimates. 
A second main asset of Bayesian models is that they represent regularized versions of their classical counterparts.  
For instance, maximum a posteriori (MAP) estimation of a Bayesian regression model with double exponential (Laplace) prior is equivalent to Lasso regression~\citep{tibshirani1996regression}, while a Gaussian prior leads to ridge regression. 
When it comes to NNs, the regularization mechanism is also well appreciated in the literature, since they traditionally suffer from overparameterization, resulting in overfitting. 

Central in the field of regularization techniques is the \textit{weight decay} penalty \citep{Krogh1991}, which is equivalent to MAP estimation of a Bayesian neural network with independent Gaussian priors on the weights.
Dropout has recently been suggested as a regularization method in which neurons are randomly turned off \cite{srivastava2014dropout}, and \citet{gal2016dropout} proved that a neural network with arbitrary depth and non-linearities, with dropout applied before every weight layer, is mathematically equivalent to an approximation to the probabilistic deep Gaussian process~\citep{damianou2013deep}, leading to the consideration of such NNs as Bayesian models.

This paper is devoted to the investigation of hidden units prior distributions in Bayesian neural networks under the assumption of independent Gaussian weights. We first describe a fully connected neural network architecture as illustrated in Figure~\ref{figure:nn_visualization}. Given an input $\bx \in \mathbb{R}^N$, the $\ell$-th hidden layer unit activations are defined as
\begin{align}\label{eq:propagation}
      \bg^{(\ell)}(\bx) = \bW^{(\ell)} \bh^{(\ell - 1)} (\bx), \quad \bh^{(\ell)} (\bx) = \phi(\bg^{(\ell)}(\bx)),
\end{align}
where $\bW^{(\ell)}$ is a weight matrix including the bias vector. 
A nonlinear activation function $\phi: \mathbb{R} \to \mathbb{R}$ is applied element-wise, which is called nonlinearity, $\bg^{(\ell)}=\bg^{(\ell)}(\bx)$ is a vector of pre-nonlinearities, and $\bh^{(\ell)}=\bh^{(\ell)}(\bx)$ is a vector of post-nonlinearities. 
When we refer to either pre- or post-nonlinearities, we will use the notation $\bU^{(\ell)}$. 

\tikzset{%
  input/.style={
      circle,
      draw,
      fill=blue!30!,
      minimum size=0.4cm
    },
    every neuron/.style={
      circle,
      draw,
      fill=green!30!,
      minimum size=0.4cm
    },
    neuron missing/.style={
      draw=none, 
      fill=none,
      scale=1,
      text height=0.333cm,
      execute at begin node=\color{black}$\vdots$
    },
}

\begin{figure}%[!ht]
\begin{center}
\begin{tikzpicture}[x=0.8cm, y=0.7cm, >=stealth]

\foreach \m/\l [count=\y] in {1,2,3,missing,4}
  \node [input/.try, neuron \m/.try] (input-\m) at (0,2.2-\y) {};

\foreach \m [count=\y] in {1,2,missing,3}
  \node [every neuron/.try, neuron \m/.try ] (firsthidden-\m) at (2,1.9-\y*1.1) {};

\foreach \m [count=\y] in {1,2,missing,3}
  \node [every neuron/.try, neuron \m/.try ] (secondhidden-\m) at (4,1.9-\y*1.1) {};

\foreach \m [count=\y] in {1,2,missing,3}
  \node [every neuron/.try, neuron \m/.try ] (thirdhidden-\m) at (6,1.9-\y*1.1) {};

\foreach \m [count=\y] in {1,2,4,3}
  \node [neuron missing/.try, neuron \m/.try ] 
(lasthidden-\m) at (8,1.9-\y*1.1) {};

\foreach \l [count=\i] in {1,2,3,n}
  \draw [<-] (input-\i) -- ++(-1,0);
  \node [above] at (input-1.north) {$\bx$};

\foreach \l [count=\i] in {1};
\node [above] at (firsthidden-1.north) {$\bh^{(1)}$};

% \foreach \l [count=\i] in {1,2,n}
%   \node [above] at (secondhidden-\i.north) {$h^{(2)}_\l$};

\foreach \l [count=\i] in {1};
  \node [above] at (secondhidden-1.north) {$\bh^{(2)}$};

\foreach \l [count=\i] in {1}
  \node [above] at (thirdhidden-1.north) {$\bh^{(3)}$};

\foreach \l [count=\i] in {1,2,n}
  \draw [->] (lasthidden-\i) -- ++(1,0);
 \node [above] at (lasthidden-1.north) {$\bh^{(\ell)}$};

\foreach \i in {1,...,4}
  \foreach \j in {1,...,3}
    \draw [->] (input-\i) -- (firsthidden-\j);

\foreach \i in {1,...,3}
  \foreach \j in {1,...,3}
    \draw [->] (firsthidden-\i) -- (secondhidden-\j);

\foreach \i in {1,...,3}
  \foreach \j in {1,...,3}
    \draw [->] (secondhidden-\i) -- (thirdhidden-\j);

\foreach \i in {1,...,3}
  \foreach \j in {1,...,3}
    \draw [->] (thirdhidden-\i) -- (lasthidden-\j);

\foreach \l [count=\x from 0] in {input, $1^{\text{st}}$ hid., $2^{\text{nd}}$ hid., $3^{\text{rd}}$ hid., $\ell^{\text{th}}$ hid.}
  \node [align=center, above] at (\x*2,2) {\l \\ layer};

\foreach \l [count=\x from 0] in { ,  subW$(\frac 1 2)$,  subW$(1)$, subW$(\frac 3 2)$, subW$(\frac{\ell}2)$}
  \node [align=center, below] at (\x*2,-3) {\l};

\end{tikzpicture}
\end{center}
\caption{Neural network architecture and characterization of the $\ell$-layer units prior distribution as sub-Weibull distribution with tail parameter $\ell/2$, see Definition~\ref{def:subweibull}.}
\label{figure:nn_visualization}
\end{figure}
%================================================================================ 
%\subsection{Contributions}
%\label{subsection:contributions}
%================================================================================ 
\paragraph{Contributions.}
In this paper, we extend the theoretical understanding of feedforward fully connected NNs by studying prior distributions at the units level, under the assumption of independent and normally distributed weights. 
Our contributions are the following: 
\begin{enumerate}
\item[(i)]  As our main contribution, we prove in Theorem~\ref{theorem:sub-weibull} that under some conditions on the activation function $\phi$, a Gaussian prior on the weights induces a sub-Weibull distribution on the units (both pre- and post-nonlinearities) with optimal tail parameter $\theta = \ell/2$, see Figure~\ref{figure:nn_visualization}. The condition on $\phi$ essentially imposes that $\phi$  strikes at a linear rate to $+\infty$ or $-\infty$  for large absolute values of the argument, as ReLU does. In the case of bounded support $\phi$, like sigmoid or tanh, the units  are bounded, making them \emph{de facto} sub-Gaussian\footnote{A trivial version of our main result holds, see Remark~\ref{rem:bounded}.}

\item[(ii)] We offer an interpretation of the main result from a more elaborate regularization scheme at the level of the units in Section~\ref{section:sparsity}. 
\end{enumerate}

In the remainder of the paper, we first discuss related work, and then
present our main contributions starting with the necessary statistical background and theoretical results (i), then moving to intuitions and interpretation (ii), and ending up with the description of the experiments and the discussion of the results obtained. 
More specifically, Section~\ref{section:neural_network_notations} states our main contribution, Theorem~\ref{theorem:sub-weibull}, with a proof sketch while additional technical results are deferred to~Supplementary material. Section~\ref{section:sparsity} illustrates penalization techniques, providing an interpretation for the theorem. Section~\ref{section:experiments} describes the experiments. 
Conclusions and directions for future work are presented in Section~\ref{section:conclusion}. 

\section{Related work}
\label{subsection:related_works}

Studying the distributional behaviour of feedforward networks has been a fruitful avenue for understanding these models, as pioneered by the works of Radford Neal~\citep{neal1992bayesian,neal1996bayesian} and David MacKay~\citep{mackay1992practical}. The first results in the field addressed the limiting setting when the number of units per layer tends to infinity, also called the wide regime.  
\citet{neal1996bayesian} proved that a single hidden layer  neural network with normally distributed weights tends in distribution in the wide limit either to a Gaussian process~\citep{Rasmussen:2006aa} or to an $\alpha$-stable process, depending on how the prior variance on the weights is rescaled. 
In recent works,  \citet{matthewsgaussian}, or its extended version \citet{matthews2018gaussian}, and \citet{lee2018deep} extend the result of Neal to more-than-one-layer neural networks: when the number of hidden units grows to infinity, deep neural networks (DNNs) also tend in distribution to the Gaussian process, under the assumption of Gaussian weights for properly rescaled prior variances. For the rectified linear unit (ReLU) activation function, the Gaussian process covariance function is obtained analytically \citep{cho2009kernel}. 
For other nonlinear activation functions, \citet{lee2018deep} use a numerical approximation algorithm. This Gaussian process approximation is used for instance by \citet{hayou2019impact} for  improving neural networks training strategies. \citet{novak2019bayesian} extend the results by proving the Gaussian process limit for convolutional neural networks. 

Various distributional properties are also studied in NNs regularization methods. 
The \textit{dropout} technique~\citep{srivastava2014dropout} was  reinterpreted as a form of approximate Bayesian variational inference~\citep{kingma2015variational,gal2016dropout}. 
While \citet{gal2016dropout} built a connection between dropout and the Gaussian process, \citet{kingma2015variational} proposed a way to interpret Gaussian dropout. 
They suggested \textit{variational dropout} where each weight of a model has its individual dropout rate. \textit{Sparse variational dropout}  \cite{molchanov2017variational} extends variational dropout to all possible values of dropout rates, and leads to a sparse solution.
The approximate posterior is chosen to factorize either over rows or individual entries of the weight matrices. 
The prior usually factorizes in the same way, and the choice of the prior and its interaction with the approximating posterior family are studied by \citet{hron2018variational}.
Performing dropout can be used as a Bayesian approximation but, as noted by \citet{duvenaud2014avoiding}, it has no regularization effect on infinitely-wide hidden layers.

Recent work by \citet{bibi2018analytic} provides the expression of the first two moments of the output units of a one layer NN. Obtaining the moments is a first step towards characterizing the full  distribution. However, the methodology of \citet{bibi2018analytic} is limited to the first two moments and to single-layer NNs, while we address the problem in more generality for deep NNs.

\section{Bayesian neural networks have heavy-tailed deep units}
\label{section:neural_network_notations}
%================================================================================
The deep learning approach uses stochastic gradient descent and error back-propagation in order to fit the network parameters $(\mathbf{W}^{(\ell)})_{1\leq \ell\leq L}$, where $\ell$ iterates over all network layers. In the Bayesian approach, the parameters are random variables described by probability distributions. 

\subsection{Assumptions on neural network}
\label{subsection:assumptions}
%================================================================================
We assume a prior distribution on the model parameters, that are the weights~$\bW$.  In particular, let all weights (including biases) be independent and have zero-mean normal distribution
\begin{align}\label{eq:gaussian-prior}
      W_{i,j}^{(\ell)} \sim \mathcal{N}(0, \sigma^2_w), 
\end{align}
for all $1\leq \ell \leq L$, $1\leq i\leq H_{\ell-1}$ and $1\leq j \leq  H_{\ell}$, with fixed variance $\sigma^2_w$. 
Given some input $\bx$, such prior distribution induces by forward propagation~\eqref{eq:propagation} a prior distribution on the pre-nonlinearities and post-nonlinearities, whose \textit{tail properties} are the focus of this section. 
To this aim, the nonlinearity $\phi$ is required to span at least half of the real line as follows. 
We introduce an extended version of the nonlinearity assumption from \citet{matthews2018gaussian}: 
\begin{definition}[Extended envelope property for nonlinearities]
\label{def::extended_envelope_property}
  A nonlinearity $\phi: \mathbb{R} \to \mathbb{R}$ is said to obey the extended envelope property if there exist $c_1, c_2 \ge 0$, $d_1, d_2 > 0$ such that the following inequalities hold 
  \begin{equation}
  \begin{aligned}
  \label{eq:envelope_property}
    |\phi(u)| &\ge c_1 + d_1 |u| \quad \text{for all }  u \in \mathbb{R}_+ \text{ or } u \in \mathbb{R}_-,\\ 
  % \label{eq:envelope_property}
    |\phi(u)| &\le c_2 + d_2 |u| \quad \text{for all }  u \in \mathbb{R}.
  \end{aligned}
  \end{equation}
\end{definition}
The interpretation of this property is that $\phi$ must shoot to infinity at least in one direction ($\mathbb{R}_+$ or $\mathbb{R}_-$, at least linearly (first line of~\eqref{eq:envelope_property}), and also at most linearly (second line of~\eqref{eq:envelope_property}). 
Of course, compactly supported nonlinearities such as sigmoid and tanh do not satisfy the extended envelope property but the majority of other nonlinearities do, including ReLU, ELU, PReLU, and SeLU. 

We need to recall the definition of asymptotic equivalence between numeric sequences which we use to describe characterization properties of distributions:
\begin{definition}[Asymptotic equivalence for sequences]
\label{def::asymptotic_equivalence}
  Two sequences $a_k$ and $b_k$ are called asymptotic equivalent and denoted as $a_k \asymp b_k$ if there exist constants $d > 0$ and $D > 0$ such that 
  \begin{equation}
  \label{asymptotic_equivalence}
    d \le \frac{a_k}{b_k} \le D, \quad \text{for all } k \in \mathbb{N}.
  \end{equation}
\end{definition}

The extended envelope property of a function yields the following asymptotic equivalence:
\begin{lemma}
\label{lemma:ext_property_nonlinearity}
Let a nonlinearity $\phi: \mathbb{R} \to \mathbb{R}$ obey the extended envelope property. Then for any symmetric random variable $X$ the following asymptotic equivalence
% \footnotemark[3] 
holds
  \begin{equation}
  \label{eq:extended_envelope_property}
    \|\phi(X)\|_k \asymp \|X\|_k, \quad \text{for all }\, k \ge 1, 
  \end{equation}
  where $\|X\|_k =  \left( \mathbb{E}[ |X|^k]\right)^{1/k}$ is a $k$-th norm of $X$. 
\end{lemma}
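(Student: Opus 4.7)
The plan is to establish the two directions $d\|X\|_k \le \|\phi(X)\|_k \le D\|X\|_k$ of the asymptotic equivalence separately, with constants $d, D > 0$ independent of $k$. Throughout I implicitly assume $X$ is not almost surely zero (otherwise both sides vanish and the statement is trivial).

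For the upper bound, I would apply the pointwise envelope inequality $|\phi(u)| \le c_2 + d_2|u|$ and then Minkowski's inequality to get $\|\phi(X)\|_k \le c_2 + d_2\|X\|_k$. To absorb the additive constant $c_2$ into a multiplicative constant uniformly in $k$, I invoke the monotonicity of $L^k$ norms (Lyapunov's inequality): $\|X\|_k \ge \|X\|_1 = \mathbb{E}|X| > 0$ for all $k \ge 1$. This yields $\|\phi(X)\|_k \le (c_2/\mathbb{E}|X| + d_2)\|X\|_k$, so one can take $D = c_2/\mathbb{E}|X| + d_2$.

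For the lower bound, the first envelope inequality is assumed only on one of the half-lines; without loss of generality, suppose it holds on $\mathbb{R}_+$. Since $c_1 \ge 0$, we have $|\phi(u)| \ge d_1 |u|$ for $u \ge 0$, so restricting the expectation to $\{X \ge 0\}$ gives
\[
\mathbb{E}[|\phi(X)|^k] \ge d_1^k \, \mathbb{E}[|X|^k \mathbf{1}_{X \ge 0}].
\]
Symmetry of $X$ then gives $\mathbb{E}[|X|^k \mathbf{1}_{X \ge 0}] = \tfrac12 \mathbb{E}[|X|^k]$ (the atom at zero, if any, contributes nothing). Taking $k$-th roots yields $\|\phi(X)\|_k \ge 2^{-1/k} d_1 \|X\|_k \ge (d_1/2)\|X\|_k$ for all $k \ge 1$, so $d = d_1/2$ works.

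The only mildly delicate step is the upper bound: the additive constant $c_2$ must be controlled uniformly in $k$, which is exactly where the monotonicity of $L^k$ norms and the nondegeneracy of $X$ enter. The symmetry hypothesis is essential for the lower bound, since it guarantees that the distribution of $X$ places (at least half of its) mass on whichever half-line the linear-growth envelope holds; without it, $X$ could concentrate on the ``wrong'' side and no such constant $d$ would exist.
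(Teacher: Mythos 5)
Your proof is correct and, on the lower bound, takes a cleaner route than the paper's. For the upper bound both arguments are the same (pointwise envelope plus Minkowski), but you make explicit the step the paper leaves implicit: absorbing the additive constant $c_2$ into a multiplicative one uniformly in $k$ via Lyapunov's inequality $\|X\|_k \ge \|X\|_1 = \mathbb{E}|X|$, which is exactly what Definition~\ref{def::asymptotic_equivalence} requires. For the lower bound the paper attempts a decomposition over both half-lines, bounding $\|\phi(X)\|_k$ from below by a sum of contributions from $u_+$ and $u_-$ and invoking an auxiliary inequality $c_1 - d_1 u > |\phi(u)|$ on the negative axis; your argument instead discards the half-line where the linear lower envelope is not assumed, uses $|\phi(u)| \ge d_1|u|$ on the favorable half-line only, and recovers the full moment up to the factor $2^{-1/k} \ge 1/2$ by symmetry of $X$. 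This is simpler, avoids any assumption about $\phi$ on the other half-line, and isolates precisely where the symmetry hypothesis is used. One small nit: in the degenerate case $X = 0$ a.s., it is not true that "both sides vanish" --- the envelope property permits $\phi(0) \neq 0$ (e.g.\ $\phi(u) = 1 + |u|$), in which case the stated equivalence genuinely fails; the correct reading is that this case must simply be excluded (as it implicitly is in the paper's application, where the pre-nonlinearities are nondegenerate Gaussians).
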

The proof can be found in the supplementary material. 

% \footnotetext[3]{See Definition~B.1 for the asymptotic equivalence $\asymp$ definition in Supplementary material.}

\subsection{Main theorem}
\label{section:main_theorem}
This section postulates the rigorous result with a proof sketch.  
In the supplementary material one can find proofs of intermediate lemmas.% and a covariance theorem which states the non-negative covariance between post-nonlinearities. 

Firstly, we define the notion of \textit{sub-Weibull} random variables~\citep{kuchibhotla2018moving,vladimirova2019subweibull}. 
\begin{definition}[Sub-Weibull random variable]
\label{def:subweibull}
   A random variable $X$ satisfying for all $x > 0$ and for some $\theta > 0$
    \begin{equation}\label{eq:sub-W-tail-def}
	    \mathbb{P}(|X| \ge x) \le a \exp\left( - x^{1 / \theta} \right),
    \end{equation}
   is called a sub-Weibull random variable with so-called tail parameter $\theta$, which is denoted by $X \sim \subW(\theta)$.
\end{definition}
Sub-Weibull distributions are characterized by tails lighter than (or equally light as) Weibull distributions; in the same way as sub-Gaussian or sub-exponential distributions correspond to distributions with tails lighter than Gaussian and exponential distributions, respectively.
Sub-Weibull distributions are parameterized by a positive tail index $\theta$ and are equivalent to sub-Gaussian for $\theta=1/2$ and sub-exponential for $\theta=1$. 
To describe a tail lower bound through some sub-Weibull distribution family, i.e. a distribution of $X$ to have the tail heavier than some sub-Weibull, we define the optimal tail parameter for that distribution as the positive parameter $\theta$ characterized by:
\begin{equation}\label{eq:optimal_moment_condition}
\|X\|_k \asymp k^\theta.
\end{equation}
Then $X$ is sub-Weibull distributed with optimal tail parameter $\theta$, in the sense that for any $\theta^\prime<\theta$, $X$ is not sub-Weibull with tail parameter $\theta^\prime$ \citep[see][for a proof]{vladimirova2019subweibull}. 

The following theorem postulates the main results. 
\begin{theorem}[Sub-Weibull units]
\label{theorem:sub-weibull}
Consider a feed-forward Bayesian neural network with Gaussian priors~\eqref{eq:gaussian-prior}  and with nonlinearity $\phi$ satisfying the extended envelope condition of Definition~\ref{def::extended_envelope_property}. 
Then conditional on the input $\bx$, the marginal prior distribution\footnote{We define the \textit{marginal prior distribution} of a unit as its distribution obtained after all other units distributions are integrated out. \textit{Marginal} is to be understood by opposition to \textit{joint}, or \textit{conditional}.} induced by forward propagation~\eqref{eq:propagation} on any unit (pre- or post-nonlinearity) of the $\ell$-th hidden layer is sub-Weibull with optimal tail parameter $\theta = \ell/2$. That is for any $1\leq \ell\leq L$, and for any $1\leq m\leq H_\ell$,
$$U_m^{(\ell)}\sim \subW(\ell/2),$$
where a $\subW$ distribution is defined in Definition~\ref{def:subweibull}, and $U_m^{(\ell)}$ is either a pre-nonlinearity $g_m^{(\ell)}$ or  a post-nonlinearity $h_m^{(\ell)}$.
\end{theorem}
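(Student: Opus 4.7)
The plan is to use the moment characterization~\eqref{eq:optimal_moment_condition} of sub-Weibull distributions and prove by induction on the depth~$\ell$ that $\|U_m^{(\ell)}\|_k \asymp k^{\ell/2}$ for every $k\geq 1$. Lemma~\ref{lemma:ext_property_nonlinearity} will transfer this equivalence from the pre-nonlinearity $g_m^{(\ell)}$ to the post-nonlinearity $h_m^{(\ell)}$, because $g_m^{(\ell)}$, being a mean-zero Gaussian conditional on the previous layer, is symmetric. It therefore suffices to control $\|g_m^{(\ell)}\|_k$.

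\textbf{Base case and recursion.} For $\ell=1$, $g_m^{(1)}=\sum_j W_{j,m}^{(1)}\,x_j$ is a centered Gaussian with variance $\sigma_w^2\|\bx\|^2$, so the classical Gaussian moment asymptotic $\|Z\|_k\asymp\sqrt{k}$ yields $\|g_m^{(1)}\|_k\asymp k^{1/2}$. For the inductive step, the key observation is the conditional-Gaussian representation
\[
g_m^{(\ell)}\mid\bh^{(\ell-1)}\;\sim\;\mathcal{N}\!\bigl(0,\sigma_w^2\,\|\bh^{(\ell-1)}\|^2\bigr),\qquad\text{i.e.,}\qquad g_m^{(\ell)}\stackrel{d}{=}\sigma_w\,\|\bh^{(\ell-1)}\|\cdot Z,
\]
with $Z\sim\mathcal{N}(0,1)$ independent of $\bh^{(\ell-1)}$. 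Taking $L^k$ norms factorizes cleanly:
\[
\|g_m^{(\ell)}\|_k \;=\; \sigma_w\,\|Z\|_k\cdot\bigl\|\,\|\bh^{(\ell-1)}\|\,\bigr\|_k\;\asymp\;\sqrt{k}\cdot\bigl\|\,\|\bh^{(\ell-1)}\|\,\bigr\|_k,
\]
so the task reduces to establishing $\bigl\|\,\|\bh^{(\ell-1)}\|\,\bigr\|_k\asymp k^{(\ell-1)/2}$.

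\textbf{Two-sided bound on the layer norm.} For the upper bound, I would expand $\|\bh^{(\ell-1)}\|^2=\sum_{j=1}^{H_{\ell-1}}(h_j^{(\ell-1)})^2$ and apply Minkowski's inequality at level $k/2$ (valid for $k\geq 2$):
\[
\bigl\|\,\|\bh^{(\ell-1)}\|\,\bigr\|_k^{\,2} \;=\; \Bigl\|\sum_{j}(h_j^{(\ell-1)})^2\Bigr\|_{k/2} \;\leq\; \sum_{j}\bigl\|h_j^{(\ell-1)}\bigr\|_k^{\,2} \;\lesssim\; H_{\ell-1}\,k^{\ell-1},
\]
where the last step uses the inductive hypothesis on each coordinate, and small values $k\in[1,2]$ are handled by the monotonicity $\|\cdot\|_k\leq\|\cdot\|_2$. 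The lower bound is immediate: pointwise $\|\bh^{(\ell-1)}\|\geq|h_1^{(\ell-1)}|$, so $\bigl\|\,\|\bh^{(\ell-1)}\|\,\bigr\|_k\geq\|h_1^{(\ell-1)}\|_k\gtrsim k^{(\ell-1)/2}$. Combining the two bounds yields $\|g_m^{(\ell)}\|_k\asymp k^{\ell/2}$, and Lemma~\ref{lemma:ext_property_nonlinearity} applied to the symmetric variable $g_m^{(\ell)}$ promotes the same equivalence to $h_m^{(\ell)}=\phi(g_m^{(\ell)})$, closing the induction.

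\textbf{Main obstacle.} The delicate point is not the existence of a bound but the requirement, intrinsic to $\asymp$, that the multiplicative constants remain uniform in~$k$ and do not degenerate as the recursion runs through $L$ layers; Minkowski's inequality gives exactly this type of $k$-free constant, which is why the argument propagates cleanly from $k^{(\ell-1)/2}$ to $k^{\ell/2}$ at each layer. The matching lower bound guarantees \emph{optimality} of the exponent $\theta=\ell/2$: the moment $\|U_m^{(\ell)}\|_k$ cannot be $o(k^{\ell/2})$, so no strictly smaller $\theta'<\ell/2$ can accommodate the tail. Finally, translating the moment equivalence $\|U_m^{(\ell)}\|_k\asymp k^{\ell/2}$ into the sub-Weibull tail inequality~\eqref{eq:sub-W-tail-def} with optimal parameter $\theta=\ell/2$ follows from~\eqref{eq:optimal_moment_condition} and the Markov-type argument of~\citet{vladimirova2019subweibull}.
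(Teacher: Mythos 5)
Your proof is correct, but it takes a genuinely different --- and in fact leaner --- route than the paper's. The paper also inducts on depth via the moment characterization $\|U^{(\ell)}\|_k\asymp k^{\ell/2}$, but it handles the inductive step by expanding $\bigl(\sum_i W_i h_i\bigr)^k$ multinomially (the ``multiplication moments'' lemma in the supplementary material); to make the lower bound of that expansion work it needs all cross-moments of powers of the previous layer's units to be non-negatively correlated, which is the content of a separate intermediate result (Theorem~\ref{theorem:non-negative_covariance}), itself proved by another induction using Jensen's inequality. Your conditional-Gaussian scale-mixture representation $g_m^{(\ell)}\stackrel{d}{=}\sigma_w\|\bh^{(\ell-1)}\|\,Z$ bypasses all of this: the $L^k$ norm factorizes exactly because $Z$ is independent of $\bh^{(\ell-1)}$, the lower bound on $\bigl\|\,\|\bh^{(\ell-1)}\|\,\bigr\|_k$ follows from the pointwise domination $\|\bh^{(\ell-1)}\|\ge|h_1^{(\ell-1)}|$ with no correlation information whatsoever, and the upper bound is Minkowski at level $k/2$ with a $k$-free constant. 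In effect you never need Theorem~\ref{theorem:non-negative_covariance}. What the paper's route buys is a multiplication-moments lemma stated for general summands satisfying only a moment condition plus non-negative covariance, not relying on the sum being conditionally Gaussian; your argument is tied to the Gaussianity of the weights, but since that is exactly the hypothesis of the theorem, this costs nothing here. Both proofs share the same base case, the same use of Lemma~\ref{lemma:ext_property_nonlinearity} (justified by the symmetry of $g_m^{(\ell)}$, which in your version is immediate from the conditional representation) to pass to post-nonlinearities, and the same appeal to \citet{vladimirova2019subweibull} to convert the two-sided moment growth into the optimal sub-Weibull tail parameter.
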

\begin{proof}
The idea is to prove by induction with respect to hidden layer depth $\ell$ that pre- and post-nonlinearities satisfy the asymptotic moment equivalence 
   \[
	\|g^{(\ell)} \|_k \asymp k^{\ell/2} \text{ and } 	\|h^{(\ell)} \|_k \asymp k^{\ell/2}.
	\]
The statement of the theorem then follows by the moment characterization of optimal sub-Weibull tail coefficient in Equation~\eqref{eq:optimal_moment_condition}. 

According to Lemma~1.1 from from the supplementary material, centering does not harm tail properties, then, for simplicity, we consider zero-mean distributions $W_{i,j}^{(\ell)} \sim \mathcal{N}(0, \sigma^2_w)$.  

   {\it Base step:} Consider the distribution of the first hidden layer pre-nonlinearity $g = g^{(1)}$. Since weights $\bW_m$ follow normal distribution and $\bx$ is a feature vector, then each hidden unit $\bW^{\T}_m \bx$ follow also normal distribution 
\[
	g = \bW^{\T}_m \bx  \sim \mathcal{N} (0, \sigma^2_w\|\bx\|^2).
\]
Then, for normal zero-mean variable $g$, having variance $\sigma^2 = \sigma^2_w\|\bx\|^2$, holds the equality in sub-Gaussian property with variance proxy equals to normal distribution variance and from Lemma~1.1 in the supplementary material: 
\[
	\|g\|_k \asymp \sqrt{k}.
\]
As activation function $\phi$ obeys the extended envelope property, nonlinearity moments are asymptotically equivalent to symmetric variable moments 
\[
	\|\phi(g)\|_k \asymp \|g\|_k \asymp \sqrt{k}.
\]
It implies that first hidden layer post-nonlinearities $h$ have sub-Gaussian distribution or sub-Weibull with tail parameter $\theta=1/2$ (Definition~\ref{def:subweibull}). 

{\it Inductive step:} show that if the statement holds  for $\ell - 1$, then it also holds for $\ell$. 

Suppose the post-nonlinearity of $(\ell - 1)$-th hidden layer satisfies the moment condition. 
Hidden units satisfy the non-negative covariance theorem 
(Theorem~\ref{theorem:non-negative_covariance}):
\[
	\text{Cov} \left[ \Bigl( h^{(\ell - 1)} \Bigr)^{s}, \left(\tilde h^{(\ell - 1)} \right)^{t}\right] \ge 0, \text{ for any } s,t \in \mathbb{N}. 
\] 
Let the number of hidden units in $(\ell - 1)$-th layer equals to $H$. Then according to Lemma~2.2 from the supplementary material, under assumption of zero-mean Gaussian weights, pre-nonlinearities of $\ell$-th hidden layer $g^{(\ell)} = \sum_{i=1}^H W_{m, i}^{(\ell - 1)} h_i^{(\ell - 1)}$ also satisfy the moment condition, but with $\theta = \ell/2$ 
\[
	\|g^{(\ell)} \|_k \asymp k^{\ell/2}.
\]
From the extended envelope property (Definition~\ref{def::extended_envelope_property}) post-nonlinearities $h^{(\ell)}$ satisfy the same moment condition as pre-nonlinearities $g^{(\ell)}$. This finishes the proof.
\end{proof}

\begin{remark}\label{rem:bounded}
If the activation function $\phi$ is bounded, such as the sigmoid or tanh, then the units are bounded. 
As a result, by Hoeffding's Lemma, they have a  sub-Gaussian distribution.
\end{remark}

\begin{remark}\label{rem:normalization}
Normalization techniques, such as batch normalization~\citep{ioffe2015batch} or layer normalization~\citep{ba2016layer},  significantly reduce the training time in feed-forward neural networks. 
% COMMENT: 'normalization operations' -> change somehow 
Normalization operations can be decomposed into a set of elementary operations. According to Proposition~1.4 from the supplementary material, elementary operations do not harm the distribution tail parameter. Therefore, normalization methods do not have an influence on tail behavior. 
\end{remark}

\subsection{Intermediate theorem} 
This section states with a proof sketch that the covariance between hidden units in the neural network is non-negative. 
\begin{theorem}[Non-negative covariance between hidden units]
\label{theorem:non-negative_covariance}
  Consider the deep neural network described in, and with the assumptions of, Theorem~\ref{theorem:sub-weibull}. The covariance between hidden units of the same layer is non-negative. Moreover, for given $\ell$-th hidden layer units $h^{(\ell)}$ and $\tilde h^{(\ell)}$, it holds
  \[
    \text{Cov} \left[ \Bigl( h^{(\ell)} \Bigr)^{s}, \left(\tilde h^{(\ell)} \right)^{t}\right] \ge 0, \text{ where } s,t \in \mathbb{N}. 
  \]  
    For first hidden layer $\ell = 1$ there is equality for all $s$ and $t$. 
\end{theorem}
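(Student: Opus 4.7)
The plan is to induct on the layer depth $\ell$, reducing the mixed-power covariance at depth $\ell$ to a covariance between two univariate functions of the scalar $V:=\|\bh^{(\ell-1)}\|^2$.

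For the base case $\ell=1$, the pre-activations $g^{(1)}_m = (\bW^{(1)}_m)^{\T}\bx$ of two distinct units depend on disjoint rows of $\bW^{(1)}$, which are independent under the Gaussian prior. Conditional on $\bx$, the post-nonlinearities $h^{(1)}_m = \phi(g^{(1)}_m)$ are therefore independent across $m$, so $\mathrm{Cov}[(h^{(1)}_m)^s,(h^{(1)}_{m'})^t]=0$ for every $s,t\in\mathbb{N}$, which yields the claimed equality.

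For the inductive step, fix $\ell\ge 2$ and condition on $\bh^{(\ell-1)}$. Because the rows of $\bW^{(\ell)}$ are independent Gaussians, $g^{(\ell)}_m$ and $g^{(\ell)}_{m'}$ are, conditionally on $\bh^{(\ell-1)}$, independent centered Gaussians with common variance $\sigma_w^2 V$. Hence $h^{(\ell)}_m$ and $h^{(\ell)}_{m'}$ are conditionally independent, and the law of total covariance collapses to
\begin{equation*}
\mathrm{Cov}\!\left[(h^{(\ell)}_m)^s,(h^{(\ell)}_{m'})^t\right] \;=\; \mathrm{Cov}\!\left[\psi_s(V),\,\psi_t(V)\right],
\end{equation*}
where $\psi_r(v):=\mathbb{E}[\phi(\sigma_w\sqrt{v}\,Z)^r]$ with $Z\sim\mathcal{N}(0,1)$, the conditional $r$-th moment of $h^{(\ell)}_m$ depending on $\bh^{(\ell-1)}$ only through $V$.

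It then suffices to show that $\psi_s$ and $\psi_t$ are comonotone in $v$, since Chebyshev's integral inequality (the one-dimensional FKG inequality for monotone functions of a single random variable) then yields $\mathrm{Cov}[\psi_s(V),\psi_t(V)]\ge 0$; in particular, $s=t$ reduces to a non-negative variance. For positively homogeneous or asymptotically linear nonlinearities such as ReLU, ELU, PReLU and SeLU one has $\psi_r(v)\propto v^{r/2}$, exactly or up to lower-order corrections, and monotonicity is immediate. The main obstacle is to establish this monotonicity under only the extended envelope condition of Definition~\ref{def::extended_envelope_property}, which constrains $\phi$ pointwise but not its derivatives; my approach would be to differentiate under the integral via the heat-semigroup identity $\tfrac{d}{dv}\mathbb{E}[f(\sqrt{v}\,Z)] = \tfrac{1}{2}\mathbb{E}[f''(\sqrt{v}\,Z)]$ applied to $f=\phi^r$, then control the sign through the envelope bounds, packaging the analytic details as a dedicated lemma in the supplementary material.
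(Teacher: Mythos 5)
Your reduction is a genuinely different --- and, in its first half, cleaner --- route than the paper's. The paper expands $\bigl(\sum_i w_i h_i\bigr)^s$ multinomially, factors out the Gaussian weight moments, and reduces the claim to $\mathbb{E}\bigl[\prod_i h_i^{2(s_i+t_i)}\bigr]\ge\mathbb{E}\bigl[\prod_i h_i^{2s_i}\bigr]\,\mathbb{E}\bigl[\prod_i h_i^{2t_i}\bigr]$, which it justifies by invoking ``Jensen's inequality for the convex function $f(x_1,x_2)=x_1x_2$'' --- a function that is not convex, so that step is itself shaky. Your conditioning argument sidesteps all of this: distinct rows of $\bW^{(\ell)}$ are independent of each other and of $\bh^{(\ell-1)}$, so the units of layer $\ell$ are conditionally i.i.d.\ given the previous layer, the conditional-covariance term in the law of total covariance vanishes, and everything collapses to $\mathrm{Cov}[\psi_s(V),\psi_t(V)]$ with $V=\|\bh^{(\ell-1)}\|^2$. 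For ReLU (the only case the paper actually treats in its post-nonlinearity step) positive homogeneity gives $\psi_r(v)=(\sigma_w^2 v)^{r/2}\,\mathbb{E}[Z_+^r]$, which is nondecreasing, and Chebyshev's integral inequality finishes the proof rigorously. Note also that no induction over $\ell$ is really used: your argument is direct at each layer, which is a further simplification.

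The gap is exactly where you flag it, and it cannot be closed at the stated level of generality. The extended envelope property constrains only $|\phi|$ pointwise; it says nothing about the sign of $\phi$, let alone about $(\phi^r)''$, so the heat-semigroup identity (which in any case needs more smoothness than ReLU-type $\phi$ possesses, except distributionally) cannot have its sign ``controlled through the envelope bounds.'' Worse, the monotonicity you need can genuinely fail: $\phi=-\mathrm{ReLU}$ satisfies Definition~\ref{def::extended_envelope_property}, yet $\psi_1(v)=-c\sqrt{v}$ is decreasing while $\psi_2(v)=c'v$ is increasing, and indeed $\mathrm{Cov}\bigl[h^{(\ell)},(\tilde h^{(\ell)})^2\bigr]=-\mathrm{Cov}\bigl[\mathrm{ReLU}(g^{(\ell)}),\mathrm{ReLU}(\tilde g^{(\ell)})^2\bigr]\le 0$, typically strictly for $\ell\ge 2$. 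So the statement is false for general envelope nonlinearities with mixed odd/even powers, and no supplementary lemma can rescue it; the honest fix is an added hypothesis (e.g.\ $\phi\ge 0$ with $\psi_r$ monotone, as holds for ReLU and any non-negative positively homogeneous nonlinearity, or a restriction to even powers together with monotonicity of $|\phi|$ in $|u|$). This is in effect what the paper does silently by proving only the ReLU case and omitting the rest; your reduction has the merit of isolating precisely the one analytic fact --- comonotonicity of the $\psi_r$ --- on which the result actually hinges.
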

\begin{proof} A more detailed proof can be found in the supplementary material in Section 3. 

  Recall the covariance definition for random variables $X$ and~$Y$
  \begin{equation}
  \label{def_covariance}
    \text{Cov} \left[ X, Y \right] = \mathbb{E}[XY] - \mathbb{E}[X]\mathbb{E}[Y].
  \end{equation}

  The proof is based on induction with respect to the hidden layer number. 
  
  In the proof let us make notation simplifications: $\bw_m^{\ell} = \bW_m^{\ell}$ and $w_{mi}^{\ell} = W_{mi}^{\ell}$ for all $m \in H_\ell$. If the index $m$ is omitted, then $\bw^{\ell}$ is some the vectors $\bw_m^{\ell}$, $w_{i}^{\ell}$ is $i$-th element of the vector $\bw_m^{\ell}$.

  \textit{1. First hidden layer.}
  Consider the first hidden layer units $h^{(1)}$ and $\tilde h^{(1)}$. The covariance between units is equal to zero and the units are Gaussian, since the weights $\bw^{(1)}$ and $\tilde \bw^{(1)}$ are from $\mathcal{N}(0, \sigma_w^2)$ and independent. Thus, the first hidden layer units are independent and its covariance~\eqref{def_covariance} is equal to 0.
  Moreover, since $h^{(1)}$ and $\tilde h^{(1)}$ are independent, then $ \Bigl( h^{(1)} \Bigr)^{s}$ and $\left(\tilde h^{(1)}\right)^{t}$ are also independent.
  
  \textit{2. Next hidden layers.}
  Assume that the $(\ell - 1)$-th hidden layer has $H_{\ell - 1}$ hidden units, where $\ell >1$. Then the  $\ell$-th hidden layer pre-nonlinearity is equal to
  \begin{equation}
  \label{pre-nonlinearity}
    g^{(\ell)} = \sum_{i = 1}^{H_{\ell - 1}} w_i^{(\ell)} h_i^{(\ell - 1)}. 
  \end{equation}
  We want to prove that the covariance~\eqref{def_covariance} between the $\ell$-th hidden layer pre-nonlinearities is non-negative.
  Let us show firstly the idea of the proof in the case $H_{\ell - 1} = 1$ and then briefly describe the proof for any finite $\ H_{\ell - 1} > 1,\ H_{\ell - 1} \in \mathbb{N}$. 

  \textit{2.1 One hidden unit.} In the case $H_{\ell - 1} = 1$, the covariance~\eqref{def_covariance} sign is the same as of the expression 
  \begin{equation}
  \label{theorem:covariance:h_hidden_units_1}
    \mathbb{E} \left[\left( h^{(\ell - 1)}\right)^{2(s_1 + t_1)}\right] \nonumber - \mathbb{E} \left[\left( h^{(\ell - 1)}\right)^{2s_1}\right]\mathbb{E} \left[\left( h^{(\ell - 1)}\right)^{2t_1}\right],
  \end{equation}
  since the weighs are zero-mean distributed, its moments are equal to zero with an odd order. 
    According to Jensen's inequality for convex function $f$, we have $\mathbb{E}[f(x_1, x_2)] \ge  f(\mathbb{E}[x_1], \mathbb{E}[x_2])$.
  Since a function $f(x_1, x_2) = x_1x_2$ is convex for $x_1 \ge 0$ and $x_2 \ge 0$, then, taking $x_1 = \left( h^{(\ell - 1)}\right)^{2s_1}$ and $x_2 =\left( h^{(\ell - 1)}\right)^{2t_1}$, we have the condition we need~\eqref{theorem:covariance:h_hidden_units_1} being satisfied. 

  \textit{2.1. $H$ hidden units.} Now let us consider the covariance between pre-nonlinearities~\eqref{pre-nonlinearity} for $H_{\ell - 1} = H > 1$. Raise the sum in the brackets to the power
  \begin{align*}
    &\Bigl(\sum_{i=1}^{H} w_i^{(\ell)} h_i^{(\ell - 1)}\Bigr)^{s} =\\
    &= \sum_{s_H = 0}^{s} C_{s}^{s_H} \left(w_H^{(\ell)} h_H^{(\ell - 1)} \right)^{s_H} \Bigl(\sum_{i=1}^{H - 1} w_i^{(\ell)} h_i^{(\ell - 1)}\Bigr)^{s - s_H}. 
   \end{align*}
And the same way for the second bracket $\Bigl(\sum_{i=1}^{H} \tilde w_i^{(\ell)} h_i^{(\ell - 1)}\Bigr)^t$.
 Notice that binomial terms will be the same in the minuend and the subtrahend terms of~\eqref{def_covariance}. 
  So the covariance in our notations can be written in the form of 
  \begin{align*}
    &\text{Cov} \left[ \Bigl(\sum_{i=1}^{H_{\ell - 1}} w_i^{(\ell)} h_i^{(\ell - 1)}\Bigr)^{s}, \Bigl(\sum_{i=1}^{H_{\ell - 1}} \tilde w_i^{(\ell)} h_i^{(\ell - 1)}\Bigr)^{t} \right] =\\
    &=  \sum  \sum C \left(\mathbb{E}\left[ A B \right] - \mathbb{E}\left[ A \right]  \mathbb{E}\left[ B \right] \right),
  \end{align*}
  where $C$-terms contain binomial coefficients, $A$-terms~---~all possible products of hidden units in $\left(g^{(\ell)}\right)^s$ and $B$-terms~---~all possible products of hidden units in $\left(\tilde g^{(\ell)}\right)^t$.
  In order for the covariance to be non-negative, it is sufficient to show that the difference $\mathbb{E}\left[ A B \right] - \mathbb{E}\left[ A \right]  \mathbb{E}\left[ B \right]$ is non-negative. 
  Since the weights are Gaussian and independent, we have the following equation, omitting the superscript for simplicity,
  \begin{equation*}
    \mathbb{E}\left[ A B \right]
    = W \tilde W \cdot \mathbb{E} \left[\prod_{i=1}^H h_i^{s_i + t_i} \right],
  \end{equation*}
  \begin{equation*}
    \mathbb{E}\left[ A \right]  \mathbb{E}\left[ B \right] = W \tilde W \cdot \mathbb{E} \left[ \prod_{i=1}^H h_i^{s_i} \right] \mathbb{E} \left[\prod_{i=1}^H h_i^{t_i} \right],
  \end{equation*}
  where $W \tilde W$ is the product of weights moments
  \begin{equation*}
  W\tilde W
  = \prod_{i=1}^H \mathbb{E} \left[w_i^{s_i}\right]\mathbb{E} \left[\tilde w_i^{t_i}\right].
  \end{equation*}
  For $W \tilde W$ not equal to zero, all the powers must be even. Now we need to prove 
  \begin{equation}
  \label{theorem:covariance:h_hidden_units}
    \mathbb{E} \left[\prod_{i=1}^{H/2} h_i^{2(s_i + t_i)} \right] \ge \mathbb{E} \left[\prod_{i=1}^{H/2} h_i^{2 s_i} \right] \mathbb{E} \left[\prod_{i=1}^{H/2} h_i^{2 t_i} \right].
  \end{equation}
  According to Jensen's inequality for convex functions, since a function $f(x_1, x_2) = x_1x_2$ is convex for $x_1 \ge 0$ and $x_2 \ge 0$, then, taking $x_1 = \prod_{i=1}^{H/2} h_i^{2s_i}$ and $x_2 = \prod_{i=1}^{H/2} h_i^{2t_i}$, the condition from~\eqref{theorem:covariance:h_hidden_units} is satisfied. 

  \textit{3. Post-nonlinearities.} 
  
	Let show the proof for the ReLU nonlinearity.

	The distribution of the $\ell$-th hidden layer pre-nonlinearity $g^{(\ell)}$ is the sum of symmetric distributions, which are products of Gaussian variables $w^{(\ell)}$ and the non-negative ReLU output, i.e. the $(\ell - 1)$-th hidden layer post-nonlinearity $h^{(\ell - 1)}$. Therefore, $g^{(\ell)}$ follows a symmetric distribution and  the following inequality 
	\begin{multline*}
		\int_{-\infty}^{+\infty} \int_{-\infty}^{+\infty} g g'\,  p(g, g') \, dg \,dg'  \ge \\
		\ge \int_{-\infty}^{+\infty} g \, p(g) \, dg \cdot \int_{-\infty}^{+\infty} g' \, p(g') \,dg'
	\end{multline*}
	implies the same inequality for a positive part 
	\begin{multline*}
		\int_{0}^{+\infty} \int_{0}^{+\infty} g g'\,  p(g, g') \, dg \,dg'  \ge \\
		\ge \int_{0}^{+\infty} g \, p(g) \, dg \cdot \int_{0}^{+\infty} g' \, p(g') \,dg'.
	\end{multline*}
	Notice that the equality above is the ReLU function output and for a symmetric distribution we have
	\begin{equation}
    \label{theorem:covariance:relu_expectation}
		\int_{0}^{+\infty} x \, p(x) \, dx = \frac12 \mathbb{E}\left[ |X| \right].
	\end{equation}

That means if the non-negative covariance is proven for pre-nonlinearities, for post-nonlinearities it is also non-negative.  We omit the proof for the other nonlinearities with the extended envelope property, since instead of precise equation~\eqref{theorem:covariance:relu_expectation}, the asymptotic equivalence for moments will be used for a positive part and for a negative part --- precise expectation expressions which depend on certain nonlinearity. 
\end{proof}
% ================================================================================
%================================================================================
\subsection{Convolutional  neural networks}
\label{neural_network_notations}
%================================================================================
Convolutional neural networks~\citep{fukushima1982neocognitron,lecun1998gradient} are a particular kind of neural network for processing data that has a known grid-like topology, which allows to encode certain properties into the architecture. These then make the forward function more efficient to implement and vastly reduce the amount of parameters in the neural network. Neurons in such networks are arranged in three dimensions: width, height and depth. There are three main types of layers that can be concatenated in these architectures: convolutional, pooling, and fully-connected layers (exactly as seen in standard NNs). The convolutional layer computes dot products between a region in the inputs and its weights. Therefore, each region can be considered as a particular case of a fully-connected layer. Pooling layers control overfitting and computations in deep architectures. They operate independently on every slice of the input and reduces it spatially. The most commonly functions used in pooling layers are \textit{max pooling} and \textit{average pooling}. 
%================================================================================ 
\begin{proposition}
\label{prop:pooling_operations}
The  operations: 1. {\normalfont max pooling} and 2. {\normalfont averaging}  
do not modify the optimal tail parameter $\theta$ of sub-Weibull family. Consequently, the result of Theorem~\ref{theorem:sub-weibull} carries over to convolutional  neural networks.
\end{proposition}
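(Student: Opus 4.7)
The plan is to use the moment characterization of sub-Weibull random variables in~\eqref{eq:optimal_moment_condition} and reduce the proposition to showing that each pooling operation preserves the asymptotic moment equivalence $\|\cdot\|_k \asymp k^{\theta}$, with $\theta = \ell/2$. Let $U_1,\ldots,U_n$ denote the layer-$\ell$ unit activations inside a fixed pooling window, so by Theorem~\ref{theorem:sub-weibull} each $\|U_i\|_k \asymp k^{\theta}$. The pooling-window size $n$ is a fixed architectural constant and hence any multiplicative factor depending only on $n$ can be absorbed in the constants of Definition~\ref{def::asymptotic_equivalence}.

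For max pooling, let $Y=\max_i U_i$. I would use the elementary sandwich
\begin{equation*}
|U_{i_0}| \;\le\; \max_{1\le i\le n}|U_i| \;\le\; \Bigl(\sum_{i=1}^n |U_i|^k\Bigr)^{1/k},
\end{equation*}
valid for any fixed index $i_0$ and any $k\ge 1$, and take $k$-norms to obtain $\|U_{i_0}\|_k \le \bigl\|\max_i |U_i|\bigr\|_k \le n^{1/k}\max_i \|U_i\|_k \le n \cdot \max_i\|U_i\|_k$. Both bounding expressions are $\asymp k^{\theta}$, so $\bigl\|\max_i |U_i|\bigr\|_k \asymp k^{\theta}$. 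In the standard CNN setting pooling is applied to post-ReLU activations, which are non-negative, so $Y = \max_i |U_i|$ and the conclusion is immediate; otherwise one argues on $|Y|$ in the same way.

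For average pooling, $Y=\frac{1}{n}\sum_i U_i$, the upper bound is immediate from Minkowski's inequality: $\|Y\|_k \le \frac{1}{n}\sum_i \|U_i\|_k \asymp k^{\theta}$. For the lower bound in the non-negative (post-ReLU) case the pointwise inequality $Y \ge U_1/n$ gives $\|Y\|_k \ge \|U_1\|_k/n \asymp k^{\theta}$. In the general signed case, I would expand $\mathbb{E}[Y^{2m}]$ as a multinomial sum of cross moments $\mathbb{E}[\prod_i U_i^{2s_i}]$ (odd-order cross terms vanish by the symmetry of zero-mean Gaussian weights) and invoke Theorem~\ref{theorem:non-negative_covariance} to lower-bound these cross moments by the corresponding products of marginal moments, so that the diagonal contribution of order $\|U_1\|_{2m}^{2m}/n^{2m}$ already yields $\|Y\|_{2m} \gtrsim m^{\theta}$, which transfers to all $k$ via a routine interpolation.

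The main obstacle is precisely this lower bound for average pooling in the signed case: a priori, cancellations between negatively co-moving units could suppress the moment growth rate below $k^{\theta}$. Theorem~\ref{theorem:non-negative_covariance} is the ingredient that rules this out in the Gaussian-weight Bayesian neural network setting considered here, tying the proposition to the structural results already established, rather than to a generic sub-Weibull closure statement.
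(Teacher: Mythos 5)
Your proof is correct, but it takes a genuinely different route from the paper's. The paper's argument for max pooling is structural: it decomposes $\max(a,b)=a+\mathrm{ReLU}(b-a)$ into ``linear and ReLU operations'' and appeals to Lemma~\ref{lemma:ext_property_nonlinearity} to claim the tail is preserved; for averaging it simply asserts that summation and division by a constant do not affect the tail. You instead work directly with the moment characterization $\|\cdot\|_k\asymp k^\theta$ and the pointwise sandwich
\begin{equation*}
|U_{i_0}|\;\le\;\max_{1\le i\le n}|U_i|\;\le\;\Bigl(\sum_{i=1}^n|U_i|^k\Bigr)^{1/k},
\end{equation*}
which gives $\|U_{i_0}\|_k\le\bigl\|\max_i|U_i|\bigr\|_k\le n^{1/k}\max_i\|U_i\|_k$ with constants uniform in $k$. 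This is cleaner and more rigorous: the paper's reduction is delicate because $b-a$ need not be symmetric (the hypothesis of Lemma~\ref{lemma:ext_property_nonlinearity}) and because it conflates preservation of an upper tail bound with preservation of the \emph{optimal} tail parameter, which requires a matching lower bound. Likewise for averaging, your observation that the lower bound could fail for signed summands due to cancellation is a real gap in the paper's one-line justification; your resolution via non-negativity of post-ReLU activations (the case that actually arises in CNNs), with Theorem~\ref{theorem:non-negative_covariance} as a fallback for the signed case, is the right fix. Two small caveats: for signed units, $\max_i U_i\ge|U_{i_0}|$ fails pointwise, so the lower bound should be run on the event $\{U_{i_0}>0\}$ (costing a factor $2^{-1/k}$ by symmetry); and the multivariate moment inequality $\mathbb{E}[\prod_iU_i^{2s_i}]\ge\prod_i\mathbb{E}[U_i^{2s_i}]$ you invoke is a many-unit extension of the pairwise statement of Theorem~\ref{theorem:non-negative_covariance}, though it is exactly what the paper establishes en route in its inequality for $H$ hidden units. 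Neither affects the conclusion.
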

%================================================================================ 
The proof can be found in the supplementary material. 

%================================================================================ 
\begin{corollary}
Consider a convolutional neural network containing  convolutional, pooling and fully-connected layers under assumptions from Section~\ref{subsection:assumptions}. Then a unit of $\ell$-th hidden layer has sub-Weibull distribution with optimal tail parameter $\theta = \ell/2$, where $\ell$ is the number of convolutional and fully-connected layers.  
\end{corollary}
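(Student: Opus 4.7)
The plan is to reduce the corollary to Theorem~\ref{theorem:sub-weibull} and Proposition~\ref{prop:pooling_operations} via induction on the index $\ell$ that counts only convolutional and fully-connected layers. Pooling layers interleaved between these can be absorbed in the inductive step because, by Proposition~\ref{prop:pooling_operations}, neither max pooling nor averaging modifies the optimal sub-Weibull tail parameter of their inputs; they are ``transparent'' to the analysis and do not contribute to $\ell$.

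First I would handle the base case exactly as in Theorem~\ref{theorem:sub-weibull}: any pre-nonlinearity of the first convolutional or fully-connected layer is a Gaussian-weighted linear combination of the deterministic input features restricted to its receptive field, hence Gaussian with $\|g^{(1)}\|_k \asymp \sqrt{k}$, and the extended envelope property together with Lemma~\ref{lemma:ext_property_nonlinearity} transfers this moment equivalence to the post-nonlinearity. For the inductive step, a fully-connected layer is handled by the proof of Theorem~\ref{theorem:sub-weibull} directly. A convolutional layer produces pre-nonlinearities of the form $\sum_{i \in R} W_i\, h_i^{(\ell-1)}$, where $R$ indexes the receptive field; this is still a finite zero-mean Gaussian-weighted sum of the previous post-nonlinearities, exactly the structural setting of Lemma~2.2 in the supplementary material. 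The resulting moment bound $\|g^{(\ell)}\|_k \asymp k^{\ell/2}$ then extends to $h^{(\ell)}$ via the extended envelope property, completing the induction.

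The hard part will be verifying that Theorem~\ref{theorem:non-negative_covariance}, which is invoked as the key ingredient of Lemma~2.2, continues to hold when previous layers are convolutional and therefore share weights across spatial locations. The existing proof uses only the zero-mean, independence and Gaussianity of the weights entering the unit currently being expanded, combined with a Jensen-type inequality applied to non-negative products of post-nonlinearities. Weight sharing in deeper layers can only induce additional positive dependence between units at different spatial positions, which preserves the direction of Jensen's inequality for the convex map $f(x_1,x_2)=x_1 x_2$ restricted to the non-negative orthant; hence the required lower bound on mixed moments still goes through. Once this covariance step is confirmed, combining the layer-wise induction with the tail-preservation of pooling operations yields the corollary, with $\ell$ being exactly the number of convolutional and fully-connected layers.
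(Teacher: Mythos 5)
Your proposal follows essentially the same route as the paper: treat each convolutional output as a dot product over its receptive field (hence a particular instance of a fully-connected unit), run the induction of Theorem~\ref{theorem:sub-weibull} over convolutional and fully-connected layers only, and use Proposition~\ref{prop:pooling_operations} to make pooling layers transparent to the tail parameter. The paper's own proof is exactly this, stated in two sentences.

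One caution on the part where you go beyond the paper. You correctly identify that the non-negative covariance condition of Theorem~\ref{theorem:non-negative_covariance} is the delicate ingredient under weight sharing, but your resolution --- that ``weight sharing can only induce additional positive dependence'' --- is asserted rather than proved, and it is not obviously true. Two units of the same feature map at spatial positions with receptive fields $R_1,R_2$ are $\phi(\bw^\T\bx_{R_1})$ and $\phi(\bw^\T\bx_{R_2})$ with the \emph{same} filter $\bw$; the pre-activations are jointly Gaussian with covariance $\sigma_w^2\,\bx_{R_1}^\T\bx_{R_2}$, which can be negative for some inputs, in which case monotone functions of them (such as even powers of the ReLU outputs) can have negative covariance. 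So the Jensen/independence argument used in the fully-connected case does not transfer automatically. That said, the paper's proof does not address this point at all, so you are not missing anything the paper supplies --- you have merely made explicit a gap that the published argument leaves implicit. (Note also that the marginal tail statement for a single unit only needs the covariance condition among the units feeding into it, and the weights \emph{within} one filter application are independent, so the issue arises one layer back, among the spatially shared previous-layer units.)
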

\begin{proof}
Proposition~\ref{prop:pooling_operations} implies that the pooling layer keeps the tail parameter. From discussion at the beginning of the section, the result of Theorem~\ref{theorem:sub-weibull} is also applied to convolutional neural networks where the depth is considered as the number of convolutional and fully-connected layers. 
\end{proof}
%================================================================================
%================================================================================
\section{Regularization scheme on the units}
\label{section:sparsity}

Our main theoretical contribution, Theorem~\ref{theorem:sub-weibull}, characterizes the marginal prior distribution of the network units as follows: when the depth increases, the distribution becomes more heavy-tailed.
In this section, we provide an interpretation of the result in terms of regularization at the level of the units. To this end, we first briefly recall shrinkage and penalized estimation  methods.

\subsection{Short digest on penalized estimation}

The notion of penalized estimation is probably best illustrated on the simple linear regression model, where the aim is to improve prediction accuracy by shrinking, or even putting exactly to zero, some coefficients in the regression. Under these circumstances, inference is also more \emph{interpretable} since, by reducing the number of coefficients effectively used in the model, it is possible to grasp its salient features. Shrinking is performed  by imposing a penalty on the size of the coefficients, which is equivalent to allowing for a given budget on their size. Denote the regression parameter by $\beta \in \mathbb{R}^p$, the regression sum-of-squares by $R(\beta)$, and the penalty by $\lambda L(\beta)$, where $L$ is some norm on $\mathbb{R}^p$ and $\lambda$ some positive tuning parameter. Then, the two  formulations of the regularized problem 
\begin{gather*}
    \min_{\beta\in \mathbb{R}^p} R(\beta) +\lambda L(\beta), \text{ and} \\  
    \min_{\beta\in \mathbb{R}^p} R(\beta) \ \text{ subject to } L(\beta) \leq t,
\end{gather*}
are equivalent, with some one-to-one correspondence between $\lambda$ and $t$, and are respectively termed the \textit{penalty} and the \textit{constraint} formulation. This latter formulation provides an interesting geometrical intuition of the shrinkage mechanism: the constraint $L(\beta) \leq t$ reads as imposing a total budget of $t$ for the parameter size in terms of the norm $L$. If the ordinary least squares estimator $\hat\beta^\text{ols}$ lives in the $L$-ball with surface $L(\beta) = t$, then there is no effect on the estimation. In contrast,  when $\hat\beta^\text{ols}$ is outside the ball, then the intersection of the lowest level curve of the sum-of-squares $R(\beta)$ with the $L$-ball defines the penalized estimator.

The choice of the $L$ norm has considerable effects on the problem, as can be sensed geometrically. Consider for instance $\Lnorm^q$ norms, with $q\geq 0$. For any $q>1$, the associated $\Lnorm^q$ norm is differentiable and  contours have a round shape without sharp angles. In that case, the penalty effect is to shrink the $\beta$ coefficients towards 0. The most well-known estimator falling in this class is the \textit{ridge} regression obtained with $q=2$, see Figure~\ref{fig:shrinkage} top-left panel. 
In contrast, for any $q\in(0,1]$, the  $\Lnorm^q$ norm has some  non differentiable points along the axis coordinates, see Figure~\ref{fig:shrinkage} top-right and bottom panels. Such critical points are more likely to be hit by the level curves of the sum-of-squares $R(\beta)$, thus setting exactly to zero some of the parameters. A very successful approach in this class is the Lasso obtained with $q=1$. Note that the problem is computationally much easier in the convex situation which occurs only for $q\geq 1$. 
\begin{figure}[ht]
  \centering
  \includegraphics[height=4cm]{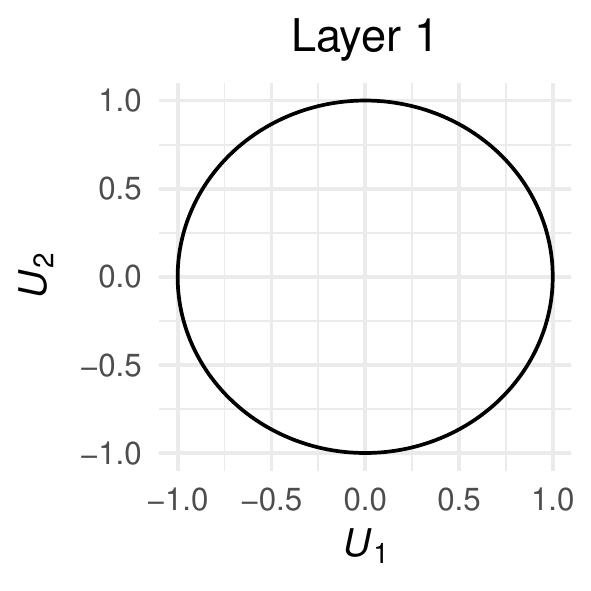}
  \includegraphics[height=4cm]{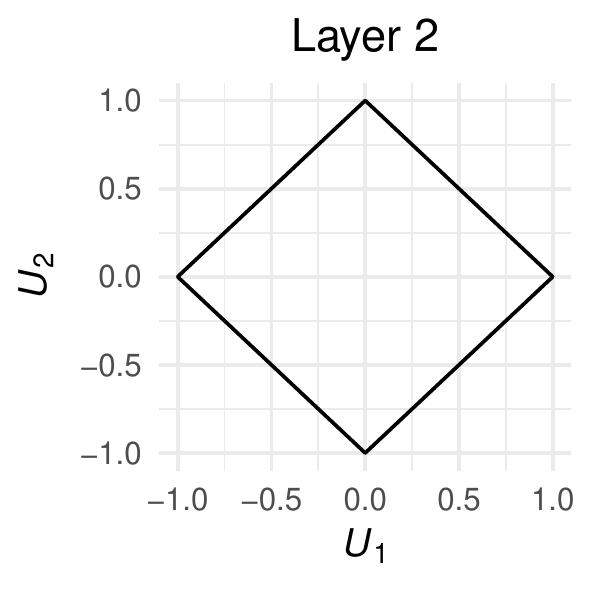}
  \includegraphics[height=4cm]{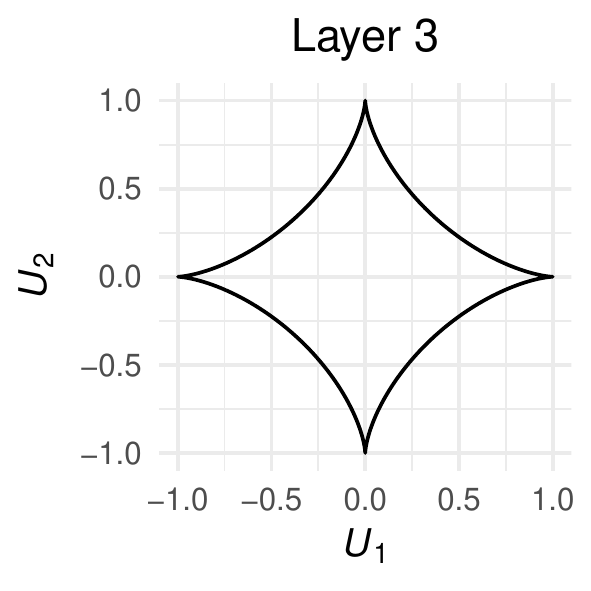}
  \includegraphics[height=4cm]{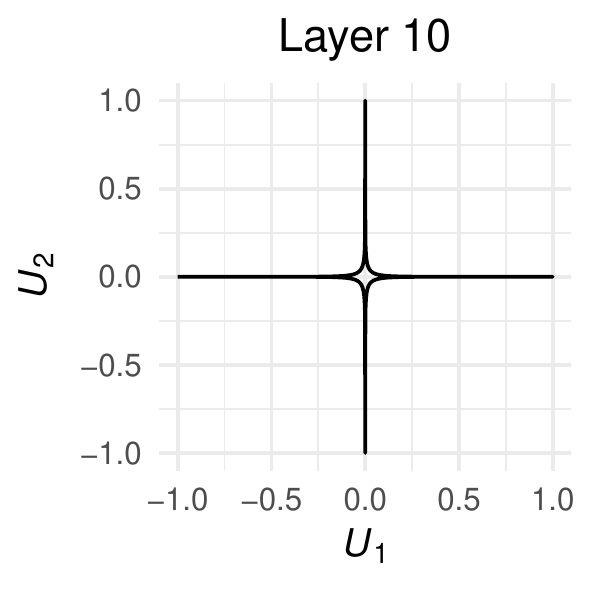}
  \caption{$\Lnorm^{2/\ell}$-norm unit balls (in dimension 2) for layers $\ell=1,2,3$ and $10$.}
  \label{fig:shrinkage}
\end{figure}

\subsection{MAP on weights $\bW$ is weight decay}

These penalized methods have a simple Bayesian counterpart in the form of the maximum a posteriori (MAP) estimator. In this context, the objective function $R$ is the negative log-likelihood, while the penalty $L$ is the negative log-prior. The objective function takes on the form of sum-of-squared errors for regression under Gaussian errors, and of cross-entropy for classification.

For neural networks, it is well-known that an independent Gaussian prior on the weights 
\begin{align}\label{eq:prod}
    \pi(\bW) \propto \prod_{\ell=1}^L\prod_{i,j}\edr^{-\frac{1}{2} (W_{i,j}^{(\ell)})^2}, 
\end{align}
is equivalent to the weight decay penalty, also known as ridge regression:
\begin{align}\label{eq:sum}
    L(\bW) = \sum_{\ell=1}^L\sum_{i,j}(W_{i,j}^{(\ell)})^2 = 
    \Vert \bW \Vert_2^2,
\end{align}
where products in~\eqref{eq:prod} and sums in~\eqref{eq:sum} involving $i$ and $j$ above are over $1\leq i\leq H_{\ell-1}$ and $1\leq j\leq H_{\ell}$, $H_0$ and $H_L$ representing respectively the input and output dimensions.

\subsection{MAP on units $\bU$}

Now moving the point of view from \textit{weights}  to \textit{units} leads to a radically different shrinkage effect. Let $U_m^{(\ell)}$ denote the $m$-th unit of the $\ell$-th layer (either pre- or post-nonlinearity). We prove in Theorem~\ref{theorem:sub-weibull} that conditional on the input $\bx$, a Gaussian prior on the weights translates into some prior on the units   $U_m^{(\ell)}$ that is marginally sub-Weibull  with optimal tail index $\theta=\ell/2$. This means that the tails of $U_m^{(\ell)}$ satisfy
\begin{align}\label{eq:tail}
          \mathbb{P}(|U_m^{(\ell)}| \ge u) \le \exp\left( - u^{2/\ell} / K_1 \right) \quad \text{for all } u \ge 0,
\end{align}
for some positive constant $K_1$. The exponent of $u$  in the exponential term above is optimal in the sense that Equation~\eqref{eq:tail} is not satisfied with some parameter $\theta^\prime$ smaller than $\ell/2$. Thus, the marginal density of $U_m^{(\ell)}$ on $\mathbb{R}$ is approximately proportional to 
\begin{equation}\label{eq:prior-density}
    \pi_m^{(\ell)}(u)\approx \edr^{-|u|^{2/\ell}/K_1}.
\end{equation}
The joint prior distribution for all the units $\bU=(U_m^{(\ell)})_{1\leq \ell \leq L, 1\leq m \leq H_\ell}$ can be expressed from all the marginal distributions by Sklar's representation theorem~\citep{sklar1959fonctions} as
\begin{equation}
\label{copula}
    \pi(\bU) = 
    \prod_{\ell=1}^L\prod_{m=1}^{H_\ell}\pi_m^{(\ell)}(U_m^{(\ell)})
    \,C(F(\bU)),
\end{equation}
where $C$ represents the copula of $\bU$ (which characterizes all the dependence between the units) while $F$ denotes its cumulative distribution function. The penalty incurred by such a prior distribution is obtained as the negative log-prior,
\begin{align}
    L(\bU) &= -\sum_{\ell=1}^L\sum_{m=1}^{H_\ell}\log\pi_m^{(\ell)}(U_m^{(\ell)})-\log C(F(\bU)),\nonumber\\
     &\overset{\text{(a)}}{\approx} \sum_{\ell=1}^L\sum_{m=1}^{H_\ell}\vert U_m^{(\ell)}\vert^{2/\ell}-\log C(F(\bU)),\nonumber\\
     &\approx \Vert \bU^{(1)}\Vert_2^{2}
     + \Vert \bU_1^{(2)}\Vert_1 +\cdots 
     + \Vert \bU^{(L)}\Vert_{2/L}^{2/L}\nonumber\\
     &\hfill\qquad-\log C(F(\bU)),\label{eq:penalty}
\end{align}
where (a) comes from~\eqref{eq:prior-density}. 
The first $L$ terms in~\eqref{eq:penalty} indicate that some shrinkage operates at every layer of the network, with a penalty term that approximately takes the form of the $\Lnorm^{2/\ell}$ norm at layer $\ell$. Thus, the deeper the layer, the stronger the regularization induced at the level of the units, as summarized in Table~\ref{table:BNN_penalty}.
\renewcommand{\arraystretch}{1.5}
\begin{table}[!ht]
\centering
\begin{tabular}{@{}cclc@{}}
\toprule
Layer                         & Penalty on $\bW$         & \multicolumn{2}{c}{Approximate penalty on $\bU$}         \\ \toprule
$1$ & $\Vert \bW^{(1)}\Vert_2^{2}$, $\Lnorm^2$   & $\Vert \bU^{(1)}\Vert_2^{2}$ & $\Lnorm^2$  (weight decay)\\\hline
$2$ & $\Vert \bW^{(2)}\Vert_2^{2}$, $\Lnorm^2$   & $\Vert \bU^{(2)}\Vert$ & $\Lnorm^1$  (Lasso)\\\hline
% \vdots &\vdots &\vdots \\
$\ell$ & $\Vert \bW^{(\ell)}\Vert_2^{2}$, $\Lnorm^2$   & $\Vert \bU^{(\ell)}\Vert_{2/\ell}^{2/\ell}$ & $\Lnorm^{2/\ell}$ \\ 
% $\ell = L$ & $\Vert \bW^{(L)}\Vert_2^{2}$, $\Lnorm^2$   & $\Vert \bU^{(L)}\Vert_{2/L}^{2/L}$, $\Lnorm^{2/L}$ \\ 
\bottomrule
\end{tabular}
\caption{Comparison of Bayesian neural network penalties on weights $\bW$ and units $\bU$.}
\label{table:BNN_penalty}
\end{table}
%================================================================================
\section{Experiments}
\label{section:experiments}
%================================================================================

We illustrate the result of Theorem~\ref{theorem:sub-weibull} on a 100 layers MLP. The hidden layers of neural network have $H_1 = 1000$, $H_2 = 990$, $H_3 = 980$, $\dots$, $H_{\ell} = 1000-10(\ell-1)$, $\dots$, $H_{100} = 10$ hidden units, respectively. The input  $\bx$ is a vector of features from $\mathbb{R}^{10^4}$.   Figure~\ref{pic:units} represents the tails of first three, 10th and 100th hidden layers pre-nonlinearity marginal distributions in logarithmic scale. 
Units of one layer have the same sub-Weibull distribution since they share the same input and prior on the corresponding weights.
The curves are obtained as histograms from a sample of size $10^5$ from the prior on the pre-nonlinearities, which is itself obtained by sampling $10^5$ sets of weights $\bW$ from the Gaussian prior~\eqref{eq:gaussian-prior} and forward propagation via~\eqref{eq:propagation}. The input vector $\bx$ is sampled with independent features from a standard normal distribution once for all at the start. The nonlinearity $\phi$ is the ReLU function.  Being a linear combination involving symmetric weights $\bW$, pre-nonlinearities $\bg$   also have a  symmetric distribution, thus we visualize only their  distribution on $\mathbb{R}_+$. 

Figure~\ref{pic:units} corroborates our main result. On the one hand, the prior distribution of the first hidden units is Gaussian (green curve), which corresponds to a $\subW(1/2)$ distribution. On the other hand, deeper layers are characterized by heavier-tailed distributions. The deepest considered layer (100th, violet curve) has an extremely flat distribution, which corresponds to a $\subW(50)$ distribution.

\begin{figure}
  \centering
  \includegraphics[height=6.2cm]{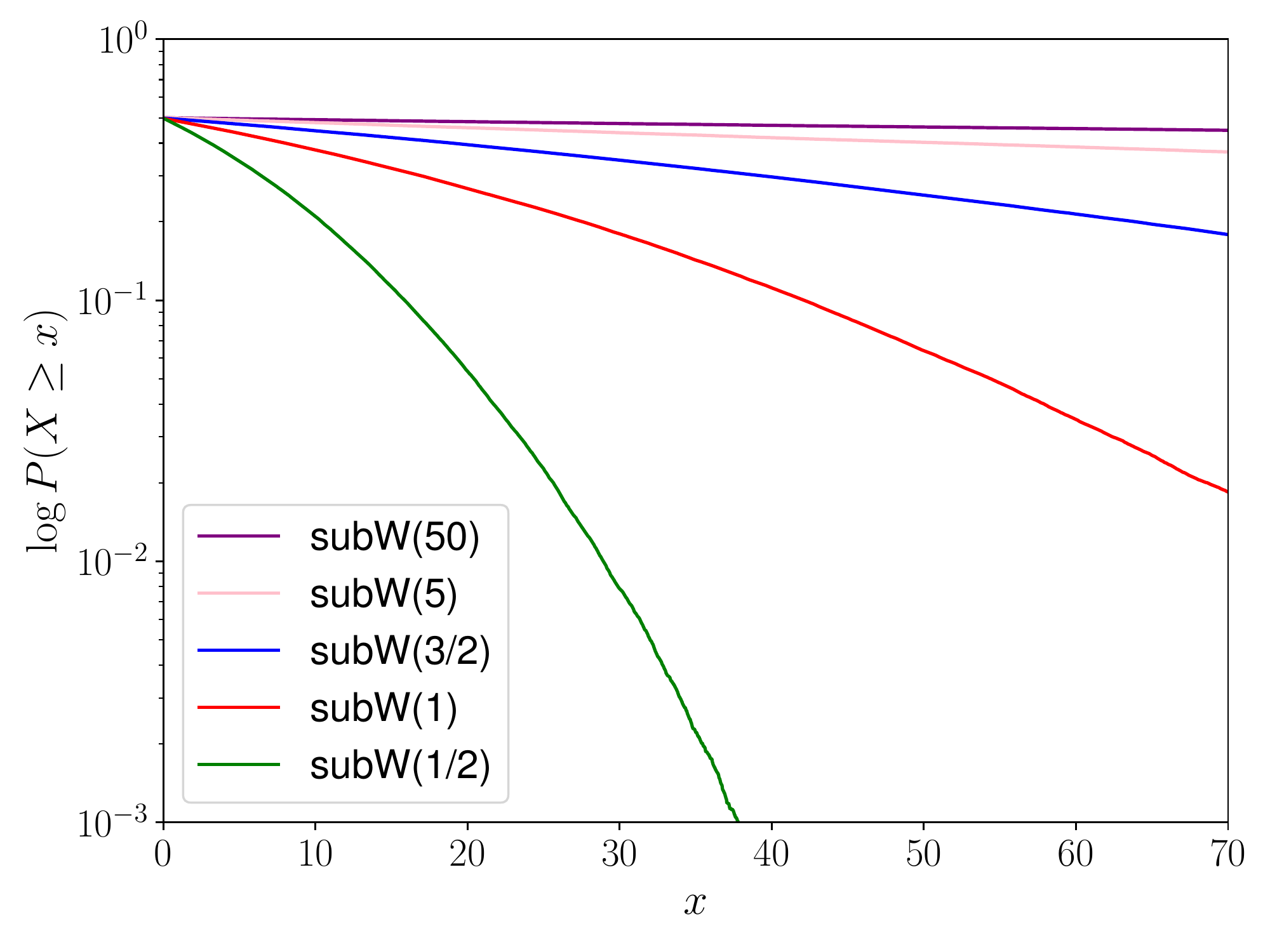}
  \caption{Illustration of layers $\ell=1,2,3,10$ and $100$ hidden units (pre-nonlinearities) marginal prior distributions. They correspond respectively to $\subW(1/2)$,  $\subW(1)$,  $\subW(3/2)$,  $\subW(5)$ and  $\subW(50)$.}
  \label{pic:units}
\end{figure}

%================================================================================
\section{Conclusion and future work}
\label{section:conclusion}
%================================================================================

Despite the ubiquity of deep learning  throughout science, medicine and engineering, the underlying theory has not kept pace with applications for deep learning. 
In this paper, we have extended the state of knowledge on Bayesian neural networks by providing a characterization of the marginal prior distribution of the units. 
\citet{matthews2018gaussian} and \citet{lee2018deep} proved that unit distributions have a Gaussian process limit in the wide regime, i.e.\ when the number of hidden units tends to infinity. 
We showed that they are heavier-tailed as depth increases, and discussed this result in terms of a regularizing mechanism at the level of the units. We anticipate that the  Gaussian process limit  of sub-Weibull distributions in a given layer for increasing width could be also recovered through a modification of the Central Limit Theorem for heavy-tailed distributions, see \citet{kuchibhotla2018moving}. 

Since initialization and learning dynamics are key in modern machine learning in order to
properly tune deep learning algorithms, a good implementation practice requires a proper understanding of the prior distribution at play and of the regularization it incurs. 

We hope that our results will open avenues for further research.  
Firstly, Theorem~\ref{theorem:sub-weibull} regards the \textit{marginal} prior distribution of the units, while a full characterization of the joint distribution of all units $\bU$ remains an open question. More specifically, a precise description of the copula defined in Equation~\eqref{copula} would provide valuable information about the dependence between the units, and also about the precise geometrical structure of the balls induced by that penalty. 
Secondly, the interpretation of our result (Section~\ref{section:sparsity}) is concerned with the  maximum a posteriori of the units, which is a point estimator. One of the benefits of the Bayesian approach to neural networks lies in its ability to provide a principled approach to uncertainty quantification, so that an interpretation of our result in terms of the full posterior distribution would be very appealing. Lastly,  the practical potentialities of our results are many: to %delve into Bayesian deep neural networks distributional properties and 
better comprehend the regularizing mechanisms in deep neural networks will contribute to design and understand strategies to avoid overfitting and improve generalization.

\section*{Acknowledgements}

We would like to thank \href{http://mistis.inrialpes.fr/people/girard/}{St\'ephane Girard} for fruitful discussions on Weibull-like distributions and  \href{https://www.irit.fr/~Cedric.Fevotte/}{C\'edric F\'evotte} for pointing out the potential relationship of our heavy-tail result with sparsity-inducing priors.

\bibliographystyle{icml2019}

\appendix

\section{Additional technical results}
% \input{icml2019_style/appendix_inside.tex}

% Introduce the definition of asymptotic equivalence between numeric sequences:
% \begin{definition}[Asymptotic equivalence]
% \label{def::asymptotic_equivalence}
%   Two sequences $a_k$ and $b_k$ are called asymptotic equivalent and denoted as $a_k \asymp b_k$ if there exist constants $d > 0$ and $D > 0$ such that 
%   \begin{equation}
%   \label{asymptotic_equivalence}
%     d \le \frac{a_k}{b_k} \le D, \quad \text{for all } k \in \mathbb{N}.
%   \end{equation}
% \end{definition}

\textbf{Lemma~\ref{lemma:ext_property_nonlinearity}} proof. 
\begin{proof}
According to asymptotic equivalence definition there must exist positive constants $d$ and $D$ such that for all $k \in \mathbb{N}$ it holds
  \begin{equation}
  \label{lemma:asympt_nonlinearity}
    d  \le \|\phi(X)\|_k / \|X\|_k \le D.   
  \end{equation}
  The extended envelope property upper bound and the triangle inequality for norms imply the right-hand side of~\eqref{lemma:asympt_nonlinearity}, since 
  \[
    \|\phi(X)\|_k \le \|c_2 + d_2 |u|\|_k \le c_2 + d_2 \|u\|_k.  
  \]
  Assume that $|\phi(u)| \ge c_1 + d_1 |u|$ for $u \in \mathbb{R}_+$. Consider the lower bound of the nonlinearity moments    
  \[
    \|\phi(X)\|_k \ge \|d_1 u_+\|_k + c_1 + \| \phi(u_-)\|_k,  
  \]
  where $\{u_-:\ u\in \mathbb{R}_-\}$ and $\{u_+: \ u\in \mathbb{R}_+\}$. For negative $u_-$ there are constants $c_1 \ge 0$ and $d_1 > 0$ such that $c_1 - d_1 u > |\phi(u)|$, or $c_1 > |\phi(u)| + d_1 u$:
  \[
    \|\phi(X)\|_k > \|d_1 u_+\|_k + \||\phi(u_-)| + d_1 u_-\|_k \ge d_1 \|u\|_k. 
  \]
  It yields asymptotic equivalence~\eqref{eq:extended_envelope_property}.
\end{proof}

\textbf{Proposition~\ref{prop:pooling_operations}} proof. 
\begin{proof} 
Let $X_i \sim \subW(\theta)$ for $1\le i \le N$ be units from one region where pooling operation is applied. Using Definition~\ref{def:subweibull}, for all $x \ge 0$ and some constant $K > 0$ we have 
    $$
      \mathbb{P}(|X_i| \ge x) \le \exp\left( - x^{1/\theta} / K \right) \text{ for all } i.
    $$

Max pooling operation takes the maximum element in the region. Since $X_i$, $1\le i \le N$ are the elements in one region, we want to check if the tail of $\max_{1\le i \le N} X_i$ obeys sub-Weibull property with optimal tail parameter is equal to $\theta$. Since max pooling operation can be decomposed into linear and ReLU operations, which does not harm the distribution tail (Lemma~\ref{lemma:ext_property_nonlinearity}), it leads to the proposition statement first part. 

Summation and division by a constant does not influence the distribution tail, yielding the proposition result regarding the averaging operation.

\end{proof}
%================================================================================ 
\begin{lemma}[Gaussian moments]
\label{lemma:gaussian_moments}
  Let $X$ be a normal random variable such that $X \sim \mathcal{N}(0, \sigma^2)$, then the~following asymptotic equivalence holds
  \[
    \|X\|_k \asymp \sqrt{k}.
  \]
\end{lemma}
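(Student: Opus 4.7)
My plan is to reduce the asymptotic equivalence to an explicit Gamma-function identity for absolute moments of a centered Gaussian, and then apply Stirling's formula.

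First I would recall the closed-form expression for the absolute moments of $X\sim\mathcal{N}(0,\sigma^{2})$:
\[
\mathbb{E}\bigl[|X|^{k}\bigr] \;=\; \sigma^{k}\,\frac{2^{k/2}\,\Gamma\!\bigl((k+1)/2\bigr)}{\sqrt{\pi}},
\]
which follows from an elementary change of variables in the Gaussian integral. Raising this to the power $1/k$ gives
\[
\|X\|_{k} \;=\; \sigma\,\sqrt{2}\,\left(\frac{\Gamma\!\bigl((k+1)/2\bigr)}{\sqrt{\pi}}\right)^{1/k}.
\]
Hence proving $\|X\|_{k}\asymp\sqrt{k}$ is equivalent to showing that $\Gamma((k+1)/2)^{1/k}/\sqrt{k}$ is bounded above and below by positive constants, uniformly in $k\in\mathbb{N}$.

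Next I would invoke Stirling's approximation in the form $\log\Gamma(z)=z\log z - z + O(\log z)$ as $z\to\infty$. Setting $z=(k+1)/2$, this yields
\[
\frac{1}{k}\log\Gamma\!\bigl((k+1)/2\bigr) \;=\; \tfrac{1}{2}\log(k/2) - \tfrac{1}{2} + O(\log k / k),
\]
so that $\|X\|_{k} \to \sigma/\sqrt{e}\cdot\sqrt{k}$ in the sense that the ratio $\|X\|_{k}/\sqrt{k}$ tends to the positive constant $\sigma/\sqrt{e}$ as $k\to\infty$. In particular the ratio is eventually sandwiched between two positive constants, say for all $k\ge k_{0}$.

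Finally, to obtain the bounds uniformly in $k\in\mathbb{N}$ (as required by Definition~\ref{def::asymptotic_equivalence}), I would handle the finitely many small indices $k<k_{0}$ by a direct observation: for each such $k$ the quantity $\|X\|_{k}/\sqrt{k}$ is a strictly positive finite number, so the minimum and maximum over this finite set combined with the asymptotic constant give global constants $d,D>0$ such that $d\sqrt{k}\le\|X\|_{k}\le D\sqrt{k}$ for all $k\ge 1$. I do not anticipate any real obstacle in this proof; the only mild subtlety is making sure that the $k\to\infty$ convergence is turned into a uniform two-sided bound for every $k\in\mathbb{N}$, which is exactly what the finite-$k$ check supplies.
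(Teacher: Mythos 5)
Your proposal is correct and follows essentially the same route as the paper's proof: the closed-form expression $\mathbb{E}[|X|^k]=\sigma^k 2^{k/2}\Gamma((k+1)/2)/\sqrt{\pi}$ followed by Stirling's approximation, yielding $\|X\|_k/\sqrt{k}\to\sigma/\sqrt{e}$. Your explicit handling of the finitely many small $k$ to turn the limit into the uniform two-sided bound required by Definition~\ref{def::asymptotic_equivalence} is a welcome touch of care that the paper leaves implicit, but it is not a different argument.
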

% ================================================================================
\begin{proof}
  The moments of central normal absolute random variable $|X|$ are equal to  
  \begin{align}
  \label{formula:abs_norm_moments}
    \mathbb{E} [|X|^k] &= \int_{\mathbb{R}} |x|^k \, p(x) \, dx \nonumber \\
    &=  2 \int_0^\infty x^k \, p(x) \, dx \nonumber \\
    &= \frac1{\sqrt{\pi}}\sigma^k 2^{k/2} \Gamma \left( \frac{k + 1}2 \right).
  \end{align}
  By the Stirling approximation of the Gamma function:
  \begin{equation}
  \label{formula:stirling}
    \Gamma(z) = \sqrt{\frac{2 \pi}{z}} \left( \frac{z}{\mbox{e}}\right)^z \left(1 + O\left(\frac1z \right)\right). 
  \end{equation}
  Substituting \eqref{formula:stirling} into the central normal absolute moment \eqref{formula:abs_norm_moments}, we obtain 
  \begin{align*}
    \mathbb{E} [|X|^k] &= \frac{\sigma^k 2^{k/2}}{\sqrt{\pi}} \sqrt{\frac{4 \pi}{k + 1}} \left( \frac{k + 1}{2 \mbox{e}}\right)^{\frac{k+1}2} \left(1 + O\left(\frac1k \right)\right) \\
    &= \frac{2 \sigma^k}{\sqrt{2 \mbox{e}}} \left( \frac{k + 1}{\mbox{e}}\right)^{k/2} \left(1 + O\left(\frac1k \right)\right). 
  \end{align*}
  Then the roots of absolute moments can be written in the form of
  \begin{align}
  \label{lemma::relu_moments_equiv}
    \|X\|_k 
    &=  \frac{\sigma}{\mbox{e}^{1/(2k)}} \sqrt{ \frac{k + 1}{\mbox{e}}} \left(1 + O\left(\frac1k \right)\right)^{1/k} \nonumber \\
    &=  \frac{\sigma}{\sqrt{\mbox{e}}} \frac{\sqrt{k + 1}}{\mbox{e}^{1/(2k)} } \left(1 + O\left(\frac1{k^2} \right)\right) \nonumber\\
    &= \frac{\sigma}{\sqrt{\mbox{e}}} c_k \sqrt{k + 1} \nonumber.
  \end{align}
  Here the coefficient $c_k$ denotes
  \[
    c_k =  \frac1{\mbox{e}^{1/(2k)}} \left(1 + O\left(\frac1{k^2} \right)\right) \to 1,
  \]
  with $k \to \infty$. Thus, asymptotic equivalence holds
  \[
    \| X \|_k \asymp \sqrt{k + 1} \asymp \sqrt{k}.
  \]
\end{proof}
% ================================================================================
\begin{lemma}[Multiplication moments]
\label{lemma:multiplication_moments}
  Let $W$ and $X$ be independent random variables such that $W \sim \mathcal{N}(0, \sigma^2)$ and for some $p > 0$ it holds
  \begin{equation}
  \label{lemma:moments_condition}
    \| X \|_k \asymp k^p.
  \end{equation}
  Let $W_i$ be independent copies of $W$, and $X_i$ be copies of~$X$, $i = 1, \dots, H$ with non-negative covariance between moments of copies
  \begin{equation}
  \label{lemma:covariance_condition}
      \text{Cov} \left[X_i^s, X_j^t \right] \ge 0, \quad \text{for }i\not=j,\ s, t \in \mathbb{N}.
  \end{equation}
  Then we have the following asymptotic equivalence
  \begin{equation}
  \label{lemma:asymptotic_equivalence}
    \Bigl\| \sum_{i=1}^H W_i X_i \Bigr\|_k \asymp k^{p + 1/2}.
  \end{equation}
\end{lemma}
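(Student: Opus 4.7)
The plan is to exploit the Gaussian structure of the weights to reduce the problem, via conditioning on $\mathbf{X}=(X_1,\ldots,X_H)$, to estimating the $k$-th norm of the random radius $\sqrt{\sum_i X_i^2}$, and then to invoke the Gaussian moment lemma.

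First I would observe that, since the $W_i$'s are independent centered Gaussians of variance $\sigma^2$ and are independent of $\mathbf{X}$, the conditional law of $Z := \sum_i W_i X_i$ given $\mathbf{X}$ is $\mathcal{N}(0, \sigma^2 S)$, where $S := \sum_i X_i^2$. Equivalently, $Z \stackrel{d}{=} \sigma G \sqrt{S}$ for a standard normal $G$ drawn independently of $\mathbf{X}$. By this independence, the $L^k$-norm factorizes as $\|Z\|_k = \sigma\,\|G\|_k\,\|\sqrt{S}\|_k$, and Lemma~\ref{lemma:gaussian_moments} immediately gives $\|G\|_k \asymp \sqrt{k}$. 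The remaining core step will be to show $\|\sqrt{S}\|_k \asymp k^p$.

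For the lower bound I would use $S \ge X_1^2$ to obtain $\mathbb{E}[S^{k/2}] \ge \mathbb{E}[|X_1|^k]$, hence $\|\sqrt{S}\|_k \ge \|X_1\|_k \asymp k^p$ by the moment hypothesis on $X$. For the upper bound I would apply the power-mean inequality to $(\sum_i X_i^2)^{k/2}$, bounding it for $k \ge 2$ by $H^{k/2-1} \sum_i |X_i|^k$; taking expectations and using that the $X_i$ are identically distributed yields $\|\sqrt{S}\|_k \le \sqrt{H}\,\|X\|_k \asymp k^p$. The case $k=1$ is handled by monotonicity of $L^k$ norms. Multiplying the Gaussian and radial factors then gives $\|Z\|_k \asymp \sqrt{k} \cdot k^p = k^{p+1/2}$.

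The only delicate point, and what I expect to be the main obstacle in a careful write-up, is verifying that the constants in the two asymptotic equivalences combine into $k$-uniform constants for the product, as required by Definition~\ref{def::asymptotic_equivalence}; this is in fact automatic since Jensen, the identical-distribution reduction, and the Gaussian-moment ingredients all come with explicit constants that do not depend on $k$. It is also worth noting that the conditioning route bypasses the non-negative covariance hypothesis on the $X_i$'s entirely: an alternative combinatorial proof expanding $\mathbb{E}[Z^{2m}]$ multinomially would invoke that hypothesis to lower-bound $\mathbb{E}[\prod_i X_i^{2l_i}]$ by $\prod_i \mathbb{E}[X_i^{2l_i}]$, but passing through $\sqrt{S}$ requires no such dependence comparison.
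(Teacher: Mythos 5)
Your proof is correct, and it takes a genuinely different route from the paper's. The paper argues by induction on $H$: it expands $\bigl\|\sum_{i}W_iX_i\bigr\|_k^k$ binomially around the last summand, lower-bounds the cross terms via the non-negative covariance hypothesis~\eqref{lemma:covariance_condition} (which is why Theorem~\ref{theorem:non-negative_covariance} is needed as an input), and upper-bounds them with H\"older's inequality. You instead exploit the $2$-stability of the Gaussian: conditionally on $\mathbf{X}$ the sum is exactly $\mathcal{N}(0,\sigma^2 S)$ with $S=\sum_i X_i^2$, so $\|Z\|_k=\sigma\|G\|_k\|\sqrt{S}\|_k$ and the whole problem collapses to showing $\|\sqrt{S}\|_k\asymp k^p$, which follows from the pointwise bounds $X_1^2\le S\le H^{1-2/k}\bigl(\sum_i|X_i|^k\bigr)^{2/k}$ with constants uniform in $k$ (since $H$ is fixed). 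This is shorter, sidesteps the combinatorial bookkeeping and the sign/absolute-value subtleties of the binomial expansion, and, as you correctly observe, renders the covariance hypothesis --- and hence the entire non-negative-covariance theorem --- unnecessary for this step, which is a real structural simplification of the paper's argument. The trade-off is that your route is specific to Gaussian weights, since only the stability property makes the conditional law exactly Gaussian, whereas the paper's moment-expansion approach would in principle extend to other symmetric weight priors satisfying $\|W\|_k\asymp\sqrt{k}$. One point to make explicit in a careful write-up: you need the vector $(W_1,\dots,W_H)$ to be independent of the vector $(X_1,\dots,X_H)$ (not merely $W\perp X$ marginally) for the conditional law to be $\mathcal{N}(0,\sigma^2 S)$ and for $G\perp S$; this is the intended reading of the lemma and is exactly the independence the paper's own proof uses when it factorizes the weight moments out of the expectations.
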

\begin{proof}
  Let us proof the statement, using mathematical induction. 

  {\bf Base case:} show that the statement is true for $H = 1$. For independent variables $W$ and $X$, we have
  \begin{equation}
  \label{lemma:base:one_unit_multiplication}
  \begin{split}
    \|W X \|_k &= \left(\mathbb{E} [|W X|^k ] \right)^{1/k} = \left(\mathbb{E} [|W|^k ]  \mathbb{E} [|X|^k ] \right)^{1/k} \\
    &= \|W\|_k  \|X\|_k. 
  \end{split}
  \end{equation}
  Since the random variable $W$ follows Gaussian distribution, then Lemma~\ref{lemma:gaussian_moments} implies 
  \begin{equation}
  \label{lemma:base:gaussian_norm}
  	\|W\|_k \asymp \sqrt{k}.
   \end{equation}
  Substituting assumption \eqref{lemma:moments_condition} and weight norm asymptotic equivalence \eqref{lemma:base:gaussian_norm} into \eqref{lemma:base:one_unit_multiplication} leads to the desired asymptotic equivalence~\eqref{lemma:asymptotic_equivalence} in case of $H = 1$.

  {\bf Inductive step:} show that if for $H = n - 1$ the statement holds, then for $H = n$ it also holds.

  Suppose for $H = n - 1$ we have 
  \begin{equation}
  \label{lemma:induction:assumption}
    \Bigl\| \sum_{i=1}^{n - 1} W_i X_i \Bigr\|_k \asymp k^{p + 1/2}.
  \end{equation}
  Then, according to the covariance assumption~\eqref{lemma:covariance_condition}, for $H = n$ we get
  \begin{align}
  \label{lemma:induction:lower_bound}
    \Bigl\| \sum_{i=1}^{n} W_i X_i \Bigr\|_k^k &= \Bigl\| \sum_{i=1}^{n-1} W_i X_i + W_n X_n\Bigr\|_k^k\\
    &\ge \sum_{j = 0}^k C_k^j \, \Bigl\| \sum_{i=1}^{n-1} W_i X_i \Bigr\|_{j}^{j} \Bigl\| W_n X_n \Bigr\|_{k - j}^{k - j}.
  \end{align}
  Using the equivalence definition (Def.~\ref{def::asymptotic_equivalence}), from the~induction assumption~\eqref{lemma:induction:assumption} for all $j = 0, \dots, k$ there exists absolute constant $d_1 > 0$ such that  
  \begin{equation}
  \label{lemma:induction:lower_bound_j}
    \Bigl\| \sum_{i=1}^{n-1} W_i X_i \Bigr\|_{j}^{j} \ge \Bigl(d_1 \, j^{p + 1/2}\Bigr)^j.
  \end{equation}
  Recalling previous equivalence results in the base case, there exists constant $m_2 > 0$ such that  
  \begin{equation}
  \label{lemma:induction:lower_bound_k-j}
    \Bigl\| W_n X_n \Bigr\|_{k - j}^{k - j} \ge \Bigl(d_2 \, (k - j)^{p + 1/2}\Bigr)^{k - j}.
  \end{equation}
  Substitute obtained bounds \eqref{lemma:induction:lower_bound_j} and \eqref{lemma:induction:lower_bound_k-j} into equation \eqref{lemma:induction:lower_bound} with denoted $d= \min\{d_1, d_2\}$, obtain
  \begin{multline}
  \label{theorem:covar:lower_bound}
    \Bigl\| \sum_{i=1}^{n} W_i X_i \Bigr\|_k^k \ge d^k \sum_{j = 0}^k C_k^j \, \left[j^j \, (k - j)^{k - j}\right]^{p + 1/2} \\
    = d^k \, k^{k(p + 1/2)} \sum_{j = 0}^k C_k^j \left[\Bigl(\frac{j}{k} \Bigr)^{j}  \Bigl(1 - \frac{j}k \Bigr)^{k - j} \right]^{p + 1/2}. 
  \end{multline}
  Notice the lower bound of the following expression 
  \begin{multline}
  \label{lemma:induction:bernoulli}
    \sum_{j = 0}^k C_k^j \left[\Bigl(\frac{j}{k} \Bigr)^{j}  \Bigl(1 - \frac{j}k \Bigr)^{k - j} \right]^{p + 1/2}\\
    \ge \sum_{j = 0}^k \left[\Bigl(\frac{j}{k} \Bigr)^{j}  \Bigl(1 - \frac{j}k \Bigr)^{k - j}\right]^{p + 1/2} \ge 2.
  \end{multline}
  Substituting found lower bound~\eqref{lemma:induction:bernoulli}  into~\eqref{theorem:covar:lower_bound}, get
  \begin{equation}
  \label{lemma:induction:final_lower_bound}
    \Bigl\| \sum_{i=1}^{n} W_i X_i \Bigr\|_k^k \ge 2 \,d^k \,k^{k(p + 1/2)} > d^k \,k^{k(p + 1/2)}.
  \end{equation}

  Now prove the upper bound. For random variables $Y$ and $Z$ the Holder's inequality holds 
  \begin{align*}
    \|YZ \|_1 &= \mathbb{E} \left[ |YZ| \right] \le \left( \mathbb{E}\left[ |Y|^2 \right] \mathbb{E}\left[ |Z|^{2} \right] \right)^{1/2} \\
    &= \|YZ\|_2 \|YZ\|_2.
  \end{align*}
  Holder's inequality leads to the inequality for $L^k$ norm
  \begin{equation}
  \label{lemma:induction:norm_property}
    \|YX\|_k^k \le \|Y\|_{2k}^k \|Z\|_{2k}^k.
  \end{equation}
  Obtain the upper bound of $\left\| \sum_{i=1}^{n} W_i X_i \right\|_k^k$ from the~norm property~\eqref{lemma:induction:norm_property} for the random variables $Y = \left( \sum_{i=1}^{n-1} W_i X_i \right)^{k - j} $ and $Z = \left( W_n X_n \right)^{j}$
  \begin{align}
  \label{lemma:induction:upper_bound}
    \Bigl\| \sum_{i=1}^{n} W_i X_i \Bigr\|_k^k  &= \Bigl\| \sum_{i=1}^{n-1} W_i X_i + W_n X_n \Bigr\|_k^k  \\
    &\le \sum_{j = 0}^k C_k^j \, \Bigl\| \sum_{i=1}^{n-1} W_i X_i \Bigr\|_{2j}^j \Bigl\| W_n X_n \Bigr\|_{2(k - j)}^{k - j}.
  \end{align}  
  From the induction assumption~\eqref{lemma:induction:assumption} for all $j = 0, \dots, k$ there exists absolute constant $D_1 > 0$ such that  
  \begin{equation}
  \label{lemma:induction:upper_bound_j}
    \Bigl\| \sum_{i=1}^{n-1} W_i X_i \Bigr\|_{2j}^{j} \le \Bigl(D_1 \, (2j)^{p + 1/2}\Bigr)^j.
  \end{equation}
  Recalling previous equivalence results in the base case, there exists constant $D_2 > 0$ such that  
  \begin{equation}
  \label{lemma:induction:upper_bound_k-j}
    \Bigl\| W_n X_n \Bigr\|_{2(k - j)}^{k - j} \le \Bigl(D_2 \, \bigl(2(k - j) \bigr)^{p + 1/2}\Bigr)^{k - j}.
  \end{equation}
  Substitute obtained bounds \eqref{lemma:induction:upper_bound_j} and \eqref{lemma:induction:upper_bound_k-j} into equation \eqref{lemma:induction:upper_bound} with denoted $D = \max\{D_1, D_2\}$, obtain
  \begin{align*}
    \Bigl\| \sum_{i=1}^{n} W_i X_i \Bigr\|_k^k  \le D^k \sum_{j = 0}^k C_k^j \Bigl[ \bigl(2j \bigr)^j \bigl(2(k - j)\bigr)^{k - j} \Bigr]^{p + 1/2}. 
  \end{align*}
  Find an upper bound for $\left[\left(1 - \frac{j}k \right)^{k - j} \left(\frac{j}{k} \right)^j \right]^{p + 1/2}$. Since expressions $\left(1 - \frac{j}{k} \right)$ and $\left(\frac{j}{k} \right)$ are less than 1, then $\left[\left(1 - \frac{j}k \right)^{k - j} \left(\frac{j}{k} \right)^j \right]^{p + 1/2} < 1$ holds for all natural numbers $p > 0$. For the sum of binomial coefficients it holds the inequality $\sum_{j = 0}^k C_k^j < 2^k$. So the final upper bound is 
  \begin{equation}
  \label{lemma:induction:final_upper_bound}
    \Bigl\| \sum_{i=1}^{n} W_i X_i \Bigr\|_k^k \le 2^k \, D^k \, (2k)^{k(p + 1/2)}.
  \end{equation}

  Hence, taking the $k$-th root of~\eqref{lemma:induction:final_lower_bound} and \eqref{lemma:induction:final_upper_bound}, we have upper and lower bounds which imply the equivalence for $H = n$ and the truth of inductive step 
  \[
    d'k^{p + 1/2} \le \Bigl\| \sum_{i=1}^{n} W_i X_i \Bigr\|_k  \le D'k^{p + 1/2}, 
  \]
  where $d' = d$ and $D' = 2^{p+ 3/2} D$.
  Since both the base case and the inductive step have been performed, by mathematical induction the equivalence holds for all $H \in \mathbb{N}$
  \[
    \Bigl\| \sum_{i=1}^{H} W_i X_i \Bigr\|_k \asymp k^{p + 1/2}.
  \]  
\end{proof}
%================================================================================
%================================================================================

\end{document}